\theoremstyle{definition}
\newtheorem{theorem}{Theorem}
\newtheorem*{theorem*}{Theorem}
\def\amax{\kern 0em\hbox{\rm \kern .25em\lower.1ex\hbox{\rlap{$\vee$}}\kern -.07em\lower.2ex\hbox{$\square$}\kern.25em}}
\def\amin{\kern 0em\hbox{\rm \kern .25em\lower.1ex\hbox{\rlap{$\wedge$}}\kern -.07em\lower.2ex\hbox{$\square$}\kern.25em}}
\def\boxmax{\kern 0em\hbox{\rm \kern .25em\lower.1ex\hbox{\rlap{$\vee$}}\kern -.07em\lower.2ex\hbox{$\square$}\kern.25em}}
\def\boxmin{\kern 0em\hbox{\rm \kern .25em\lower.1ex\hbox{\rlap{$\wedge$}}\kern -.07em\lower.2ex\hbox{$\square$}\kern.25em}}
\def\dualimp{\kern 0em\hbox{\rm \kern .25em\lower.1ex\hbox{\rlap{$\Rightarrow$}}\kern 0em\lower-1.2ex\hbox{$\overline{\hspace{2ex}}$}\kern.25em}}
\def\circmax{\kern 0em\hbox{\rm \kern .25em\lower.1ex\hbox{\rlap{$\vee$}}\kern -.18em\lower-.1ex\hbox{$\bigcirc$}\kern.25em}}
\def\circmin{\kern 0em\hbox{\rm \kern .25em\lower.1ex\hbox{\rlap{$\wedge$}}\kern -.18em\lower-.0ex\hbox{$\bigcirc$}\kern.25em}}
\newcommand{\sgn}{\text{sgn}}
\newcommand{\ii}{\mathbf{i}}
\newcommand{\jj}{\mathbf{j}}
\newcommand{\kk}{\mathbf{k}}
\newcommand{\quat}[1]{{#1}_0 + {#1}_1 \ii + {#1}_2 \jj + {#1}_3 \kk}
\newcommand{\re}[1]{\text{Re}\left\{#1\right\}}
\newcommand{\ve}[1]{\text{Ve}\left\{#1\right\}}
\newcommand{\sprod}[2]{\left\langle #1, #2 \right\rangle}
\newcommand{\bb}{\begin{equation}}
\newcommand{\ee}{\end{equation}}
\newcommand{\R}{\mathbb{R}}
\newcommand{\vetu}{\boldsymbol{u}}
\newcommand{\vetx}{\boldsymbol{x}}
\newcommand{\vety}{\boldsymbol{y}}
\newcommand{\vetw}{\boldsymbol{w}}
\newcommand{\vetv}{\boldsymbol{v}}
\newcommand{\bpm}{\begin{bmatrix}}
\newcommand{\epm}{\end{bmatrix}}
\newcommand{\betav}{\mbox{\boldmath$\beta$}}
\newcommand{\vetz}{\boldsymbol{z}}
\newcommand{\imI}{\mathbf{I}}
\newcommand{\qeq}{\quad \mbox{and} \quad}
\journal{}
\begin{document}

\begin{frontmatter}


\title{Quaternion-Valued Recurrent Projection Neural Networks on Unit Quaternions}



\author{Marcos Eduardo Valle and Rodolfo Anibal Lobo}

\address{\textit{Institute of Mathematics, Statistics, and Scientific Computing} \\
\textit{University of Campinas}\\
Campinas, Brazil \\
valle@ime.unicamp.br, rodolfolobo@ug.uchile.cl
}

\begin{abstract}
Hypercomplex-valued neural networks, including quaternion-valued neural networks, can treat multi-dimensional data as a single entity. In this paper, we present the quaternion-valued recurrent projection neural networks (QRPNNs). Briefly, QRPNNs are obtained by combining the non-local projection learning with the quaternion-valued recurrent correlation neural network (QRCNNs). We show that QRPNNs overcome the cross-talk problem of QRCNNs. Thus, they are appropriate to implement associative memories. Furthermore, computational experiments reveal that QRPNNs exhibit greater storage capacity and noise tolerance than their corresponding QRCNNs. 
\end{abstract}

\begin{keyword}
Recurrent neural network \sep Hopfield network \sep associative memory \sep quaternion-valued neural network.

\end{keyword}

\end{frontmatter}


\section{Introduction} \label{sec:introduction}

The Hopfield neural network, developed in the early 1980s, is an important and widely-known recurrent neural network which can be used to implement associative memories \cite{hopfield82,hopfield85}. Successful applications of the Hopfield network include control \cite{gan17,song17}, computer vision and image processing \cite{wang15,jli16}, classification \cite{pajares10,zhang17}, and optimization \cite{hopfield85,serpen08,cli16}. 

Despite its many successful applications, the Hopfield network may suffer from a very low storage capacity when used to implement associative memories. Precisely, due to cross-talk between the stored items, the Hebbian learning adopted by Hopfield in his original work allows for the storage of approximately $n/(2\ln n)$ items, where $n$ denotes the length of the stored vectors \cite{mceliece87}. 

Several neural networks and learning rules have been proposed in the literature to increase the storage capacity of the original bipolar Hopfield network. For example, Personnaz et al. \cite{personnaz85} as well as Kanter and Sompolinsky \cite{kanter87} proposed the projection rule to determine the synaptic weights of the Hopfield networks. The projection rule increases the storage capacity of the Hopfield network to $n-1$ items. Another simple but effective improvement on the storage capacity of the original Hopfield networks was achieved by Chiueh and Goodman's recurrent correlation neural networks (RCNNs) \cite{chiueh91,chiueh93}. Briefly, an RCNN is obtained by decomposing the Hopfield network with Hebbian learning into a two layer recurrent neural network. The first layer computes the inner product (correlation) between the input and the memorized items followed by the evaluation of a non-decreasing continuous excitation function. The subsequent layer yields a weighted average of the stored items. Alternatively, certain RCNNs can be viewed as kernelized versions of the Hopfield network with Hebbian learning \cite{garcia04a,garcia04b,perfetti08}. 

It turns out that the associative memory models described in the previous paragraphs are all designed for the storage and recall of bipolar real-valued vectors. In many applications, however, we have to process multivalued or multidimensional data \cite{hirose12}. In view of this remark, the Hopfield neural network as well as the RCNNs have been extended to hypercomplex systems such as complex numbers and quaternions. 

Research on complex-valued Hopfield neural networks dates to the late 1980s \cite{noest88a,noest88b,aizenberg92}. In 1996, Jankowski et al. \cite{jankowski96} proposed a multistate complex-valued Hopfield network with Hebbian learning that corroborated to the development of many other hypercomplex-valued networks. Among the many papers on hypercomplex-valued versions of the Hopfield network, we would like to mention the following works which are strongly related to the models addressed in this paper. First, Lee developed the projection rule for (multistate) complex-valued Hopfield networks \cite{lee06}. Based on the works of Jankowski et al. and Lee, Isokawa et al. proposed a multistate quaternion-valued Hopfield neural network using either Hebbian learning or projection rule \cite{isokawa08b,isokawa13}. Unfortunately, the multistate quaternion-valued Hopfield neural network may fails to settle at an equilibrium state \cite{valle16wcci,valle18tnnls}. In contrast, the continuous-valued quaternionic model proposed independently by Valle and Kobayashi always comes to rest at a stable equilibrium point under the usual conditions on the synaptic weights \cite{valle14bracis,kobayashi16a}. Apart from hypercomplex-valued Hopfield networks, Valle proposed a complex-valued version of the RCNNs \cite{valle14nnB}. Recently, the RCNNs have been further extended to quaternions \cite{valle18wcci}. 


In this paper we propose an improved version of the quaternion-valued RCNNs \cite{valle18wcci}. Although real, complex, and quaternion-valued RCNNs can be used to implement high-capacity associative memories, they require a sufficiently large parameter which can be impossible in practical implementations \cite{chiueh93,valle18wcci}. In order to circumvent this problem, we combine the projection rule and the QRCNNs to obtain the new quaternion-valued recurrent projection neural networks (QRPNNs). As we will show, QRPNNs always have optimal absolute storage capacity. In other words, the fundamental memories are all stationary states (fixed points) of a QRPNN under mild conditions on the stored items and the excitation function. Furthermore, the noise tolerance of QRPNNs are usually higher than their corresponding QRCNNs. Also, bipolar RPNNs are strontly related to the kernel associative memories proposed by Garcia and Moreno and further investigated by Perfetti and Ricci \cite{garcia04a,garcia04b,perfetti08}.

We would like to point out that this paper corresponds to an extended version of the conference paper \cite{valle19bracis}. The most significant differences in this paper include: 
\begin{itemize}
\item A matrix-based formulation of QRCNNs and QRPNNs. 
\item A detailed algorithm describing the new QRPNN, which can also be used to implement QRCNNs. 
\item Formalized the results concerning the storage capacity of QRPNNs (Theorem \ref{thm:fixed_points}) and their relationship with QRCNNs and RKAMs in the bipolar case (Theorems \ref{thm:RPNNxRCNN} and \ref{thm:RKAMxRPNN}).
\item Additional computational experiments, including experiments concerning the storage and recall of color images from the CIFAR dataset \cite{cifar}.
\end{itemize}

The paper is organized as follows: Next section presents some basic concepts on quaternions. A brief review on the quaternion-valued Hopfield neural network (QHNN) and quaternion-valued recurrent correlation neural networks (QRCNNs) are given respectively in Sections \ref{sec:QHNN} and \ref{sec:QRCNNs}. Quaternion-valued recurrent projection neural networks (QRPNNs) are introduced in Section \ref{sec:QRPNNs}. Computational experiments are presented in Section \ref{sec:ComputationalExperiments}. The paper finishes with the concluding remarks in Section \ref{sec:concluding}.


\section{Some Basic Concepts on Quaternions} \label{sec:quaternions}

Quaternions are hyper-complex numbers that extend the real and complex numbers systems.
A quaternion may be regarded as a 4-tuple of real numbers, i.e., $q=(q_0,q_1,q_2,q_3)$. Alternatively, a quaternion $q$ can be written as follows
\bb q = q_0 + q_1 \ii + q_2 \jj + q_3 \kk, \label{eq:quaterion} \ee
where $\ii, \jj$, and $\kk$ are imaginary numbers that satisfy the following identities:
\bb \ii^2 = \jj^2 = \kk^2 = \ii \jj \kk = -1. \ee
Note that $1,\ii,\jj$, and $\kk$ form a basis for the set of all quaternions, denoted by $\mathbb{H}$. 

A quaternion $q = \quat{q}$ can also be written as $q = q_0+\vec{q}$, where $q_0$ and $\vec{q}=q_1 \ii + q_2 \jj + q_3 \kk$ are called respectively the real part and the vector part of $q$. The real and the vector part of a quaternion $q$ are also denoted by $\re{q}:=q_0$ and $\ve{q}:=\vec{q}$.

The sum $p+q$ of two quaternions $p = \quat{p}$ and $q=\quat{q}$ is the quaternion obtained by adding their components, that is,
\bb \label{eq:sum} p+q = (p_0+q_0)+(p_1+q_1) \ii + (p_2+q_2) \jj + (p_3+q_3) \kk. \ee
Furthermore, the product $pq$ of two quaternions $p = p_0 + \vec{p}$ and $q = q_0 + \vec{q}$ is the quaternion given by 
\bb \label{eq:product} pq = p_0 q_0 - \vec{p} \cdot \vec{q} +p_0 \vec{q} + q_0 \vec{p} + \vec{p} \times \vec{q},\ee
where $\vec{p} \cdot \vec{q}$ and $\vec{p} \times \vec{q}$ denote respectively the scalar and cross products commonly defined in vector algebra. Quaternion algebra are implemented in many programming languages, including {\tt MATLAB}, {\tt GNU Octave}, {\tt Julia}, and {\tt python}.
We would like to recall that the product of quaternions is not commutative. Thus, special attention should be given to the order of the terms in the quaternion product. 

The conjugate and the norm of a quaternion $q=q_0+\vec{q}$, denoted respectively by $\bar{q}$ and $|q|$, are defined by 
\bb \label{eq:conjugate} \bar{q} = q_0-\vec{q} \qeq 
|q| = \sqrt{\bar{q} q} =  \sqrt{q_0^2 + q_1^2 + q_2^2 +q_3^2}. \ee
We say that $q$ is a {\em unit} quaternion if $|q|=1$. We denote by $\mathbb{S}$ the set of all unit quaternions, i.e., $\mathbb{S} = \{q \in \mathbb{H}: |q|=1|\}$. Note the $\mathbb{S}$ can be regarded as an hypersphere in $\mathbb{R}^4$. The quanternion-valued function $\sigma:\mathbb{H}^* \to \mathbb{S}$ given by 
\bb \label{eq:sigma} \sigma(q) = \frac{q}{|q|}, \ee
maps the set of non-zero quaternions $\mathbb{H}^* = \mathbb{H} \setminus\{0\}$ to the set of all unit quaternions. The function $\sigma$ can be interpreted as a generalization of the signal function to unit quaternions. Furthermore, $\sigma$ generalizes to the quaternion domain the complex-valued activation function proposed by Aizenberg and Moraga \cite{aizenberg07}.

Finally, the inner product of two quaternion-valued column vectors $\vetx=[x_1,\ldots,x_n]^T \in \mathbb{H}^n$ and $\vety = [y_1,\ldots,y_n]^T \in \mathbb{H}^n$ is given by 
\bb \label{eq:inner} \sprod{\vetx}{\vety} = \sum_{i=1}^n \bar{y}_i x_i. \ee
Note that $\sprod{\vetx}{\vetx} = n$ for all unit quaternion-valued vectors $\vetx \in \mathbb{S}^n$. The Euclidean norm of a quaternion-valued vector $\vetx \in \mathbb{H}^n$ is defined by $\|\vetx\|_2 = \sqrt{\sprod{\vetx}{\vetx}}$.

\section{Quaternion-Valued Hopfield Neural Networks} \label{sec:QHNN}

The famous {\em Hopfield neural network} (HNN) is a recurrent model which can be used to implement associative memories \cite{hopfield82}. Quaternion-valued versions of the Hopfield network, which generalize complex-valued models, have been extensively investigated in the past years \cite{valle18tnnls,valle14bracis,kobayashi16a,isokawa06,isokawa07,isokawa08a,isokawa12,osana12,kobayashi17a,kobayashi17b}. A comprehensive review on several types of {\em quaternionic HNN} (QHNN) can be found in \cite{isokawa13,valle18tnnls}. Briefly, the main difference between the several QHNN models resides in the activation function. 

In this paper, we consider a quaternion-valued activation function $\sigma$ given by \eqref{eq:sigma} whose output is obtained by normalizing its argument to length one  \cite{valle14bracis,kobayashi16a}. The resulting network can be implemented and analyzed more easily than the multistate quaternion-valued Hopfield neural network proposed by Isokawa et al. \cite{isokawa08b,isokawa13}. Furthermore, as far as we know, it is the unique version of the Hopfield network on unit quaternions that always yields a convergent sequence in the asynchronous update mode under the usual conditions on the synaptic weights \cite{valle18tnnls}. We would like to point out that, together with the QHNN on unit quaternions, the QHNN based on the twin-multistate activation function are the unique stable quaternion-valued Hopfield networks \cite{kobayashi17a}. The QHNN based on twin-multistate activation function, however, do not generalize the bipolar and complex-valued models, i,e., the bipolar and complex-valued models are not particular instances of the twin-multistate QHNN.

The QHNN is defined as follows: Let $w_{ij} \in \mathbb{H}$ denotes the $j$th quaternionic synaptic weight of the $i$th neuron of a network with $n$ neurons. Also, let the state of the QHNN at time $t$ be represented by a column quaternion-valued vector $\vetx(t) = [x_1(t),\ldots,x_n(t)]^T \in \mathbb{S}^n$, that is, the unit quaternion $x_i(t) = x_{i0}(t) + x_{i1}(t)\ii + x_{i2}(t) \jj + x_{i3}(t)\kk$ corresponds to the state of the $i$th neuron at time $t$. Given an initial state (or input vector) $\vetx(0) = [x_1,\ldots,x_n]^T \in \mathbb{S}^n$, the QHNNs defines recursively the sequence of quaternion-valued vectors $\vetx(0),\vetx(1),\vetx(2), \ldots$ by means of the equation
\bb \label{eq:update} x_j(t+1) = \begin{cases} \sigma\left(a_j(t)\right), & 0<|a_j(t)|<+\infty, \\ x_j(t), & \mbox{otherwise}, \end{cases} \ee
where
\bb \label{eq:hopfield} a_i(t)=\sum_{j=1}^n w_{ij} x_j(t), \ee
is the activation potential of the $i$th neuron at iteration $t$. In analogy with the traditional real-valued bipolar Hopfield network, the sequence produced by \eqref{eq:update} and \eqref{eq:hopfield} using asynchronous update mode is convergent for any initial state $\vetx(0) \in \mathbb{S}^n$ if the synaptic weights satisfy \cite{valle14bracis}:
\bb \label{eq:converg} w_{ij}=\bar{w}_{ji} \qeq w_{ii} \geq 0, \quad \forall i,j \in \{1,\ldots,n\}. \ee 
Here, the inequality $w_{ii} \geq 0$ means that $w_{ii}$ is a non-negative real number.
Moreover, the synaptic weights of a QHNN are usually determined using either the correlation or projection rule \cite{isokawa13}. Both correlation and projection rule yield synaptic weights that satisfy \eqref{eq:converg}.

Consider a fundamental memory set $\mathcal{U}=\{\vetu^1,\ldots,\vetu^p \}$, where each $\vetu^\xi = [u_1^\xi,\ldots,u_n^\xi]^T$ is a quaternion-valued column vector whose components $u_i^\xi=\quat{u^\xi_i}$ are unit quaternions. In the quaternionic version of the {\em correlation rule}, also called {\em Hebbian learning} \cite{isokawa13}, the synaptic weights are given by
\bb \label{eq:correlation} w_{ij}^c = \frac{1}{n} \sum_{\xi=1}^p u_i^\xi \bar{u}_j^\xi, \quad \forall i,j \in \{1,2,\ldots,n\}. \ee  
Unfortunately, such as the real-valued correlation recording recipe, the quaternionic correlation rule is subject to the cross-talk between the original vectors $\vetu^1,\ldots,\vetu^p$. In contrast, the {\em projection rule}, also known as the {\em generalized-inverse recording recipe}, is a non-local storage prescription that can suppress the cross-talk effect between the fundamental memories $\vetu^1,\ldots,\vetu^p$ \cite{kanter87}. Formally, in the projection rule the synaptic weights are defined by 
\bb \label{eq:projection} w_{ij}^p = \frac{1}{n} \sum_{\eta=1}^p \sum_{\xi=1}^p u_i^\eta c_{\eta \xi}^{-1} \bar{u}_j^\xi, \ee  
where $c^{-1}_{\eta\xi}$ denotes the $(\eta,\xi)$-entry of the quaternion-valued inverse of the matrix $C \in \mathbb{H}^{p \times p}$ given by
\bb c_{\eta\xi} = \frac{1}{n} \sum_{j=1}^n \bar{u}^\eta_j u_j^\xi =  \frac{1}{n}  \sprod{\vetu^\xi}{\vetu^\eta}, \quad \forall \mu,\nu \in \{1,\ldots,p\}. \ee
It is not hard to show that, if the matrix $C$ is invertible, then $\sum_{j=1}^n w_{ij}^p u_j^\xi = u_i^\xi$ for all $\xi = 1,\ldots,p$ and $i=1,\ldots,n$ \cite{isokawa13}. Therefore, all the fundamental memories are fixed points of the QHNN with the projection rule. On the downside, the projection rule requires the inversion of a $p\times p$ quaternion-valued matrix. 

\section{Quaternion-Valued Recurrent Correlation Neural Networks} \label{sec:QRCNNs}

Recurrent correlation neural networks (RCNNs), formerly known as recurrent correlation associative memories (RCAMs), have been introduced in 1991 by Chiueh and Goodman for the storage and recall of $n$-bit vectors \cite{chiueh91}. In contrast to the original Hopfield neural network which has a limited storage capacity, some RCNN models can reach the storage capacity of an ideal associative memory \cite{hassoun88}. Furthermore, certain RCNNs can be viewed as kernelized versions of the Hopfield network with Hebbian learning \cite{garcia04a,garcia04b,perfetti08}. Finally, RCNNs are closely related to the dense associative memory model introduced by Krotov and Hopfield to establish the duality between associative memories and deep learning \cite{krotov16,demircigil17}. 

The RCNNs have been generalized for the storage and recall of complex-valued and quaternion-valued vectors \cite{valle14nnB,valle18wcci}. In the following, we briefly review the quaternionic recurrent neural networks (QRCNNs). Precisely, to pave the way for the development of the new models introduced in the next section, let us derive the QRCNNs from the correlation-based QHNN described by \eqref{eq:update}, \eqref{eq:hopfield}, and \eqref{eq:correlation}.

Consider a fundamental memory set $\mathcal{U}=\{\vetu^1,\ldots,\vetu^p \} \subset \mathbb{S}^n$. Using the synaptic weights $w_{ij}^c$ given by \eqref{eq:correlation}, we conclude from \eqref{eq:hopfield} that the activation potential of the $i$th neuron at iteration $t$ of the correlation-based QHNN satisfies
\begin{align*}
    a_i(t) &= \sum_{j=1}^n w_{ij}^c x_j(t) = \sum_{j=1}^n \left[\frac{1}{n} \sum_{\xi=1}^p u_i^\xi \bar{u}_j^\xi \right] x_j(t) \\
    &= \sum_{\xi=1}^p u_i^\xi \left[\frac{1}{n} \sum_{j=1}^n \bar{u}_j^\xi x_j(t) \right] \\ &= \sum_{\xi=1}^p u_i^\xi \left[\frac{1}{n} \sprod{\vetx(t)}{\vetu^\xi} \right].
\end{align*} 
In words, the activation potential $a_i(t)$ is given by a weighted sum of $u_i^1,\ldots,u_i^p$. Moreover, the weights are proportional to the inner product between the current state $\vetx(t)$ and the fundamental memory $\vetu^\xi$. 

In the QRCNN, the activation potential $a_i(t)$ is also given by a weighted sum of $u_i^1,\ldots,u_i^p$. Following a reasoning similar to the ``kernel trick'' \cite{scholkopf02}, however, the weights are given by function of the real part of the inner product $\sprod{\vetx(t)}{\vetu^\xi}$. Precisely, let $f:[-1,1]\rightarrow \R$ be a (real-valued) continuous and monotone non-decreasing function referred to as the excitation function. Given a quaternionic input vector $\vetx(0) = [x_1(0),\ldots,x_N(0)]^T \in \mathbb{S}^N$, a QRCNN defines recursively a sequence $\{\vetx(t)\}_{t\geq 0}$ of quaternion-valued vectors by means of \eqref{eq:update} where the activation potential of the $i$th output neuron at time $t$ is given by
\bb \label{eq:QRCNN} a_i(t)=\sum_{\xi=1}^p w_\xi(t) u_i^\xi, \quad \forall i=1,\ldots,n,\ee
with
\bb \label{eq:weights} w_\xi(t)= f\left(\frac{1}{n}\re{\sprod{\vetx(t)}{\vetu^\xi}}\right), \quad \forall \xi \in 1,\ldots,p. \ee

Alternatively, the dynamic of a QRCNN can be described using a matrix-vector notation. Let $U = [\vetu^1,\ldots,\vetu^p] \in \mathbb{S}^{n \times p}$ be the matrix whose columns correspond to the fundamental memories, $U^*$ denote the conjugate transpose of $U$, and assume the functions $f$, $\sigma$, and $\re{\cdot}$ are evaluated in a component-wise manner. Given an initial state $\vetx(0)$, the dynamic of a QRCNN using synchronous update is described by the equations
\bb \label{eq:matweights} \vetw(t) = f\big(\re{U^* \vetx(t)}/n\big),\ee
and
\bb \label{eq:matQRCNN} \vetx(t+1) = \sigma \big(U \vetw(t) \big). \ee

Such as the original \cite{chiueh91} and the complex-valued RCNNs \cite{valle14nnB}, a QRCNN is implemented by the fully connected two layer neural network with $p$ hidden neurons shown in Figure \ref{fig:topology}a). The first layer evaluates $f$ at the real part of the inner product $\sprod{\vetx(t)}{\vetu^\xi}$ divided by $n$. Equivalently, the first layer is equipped with quaternion-valued neurons whose quaternionic synaptic weights are given by the matrix $U^*$ and the activation function is $\varphi(\cdot) = f(\re{\cdot})$. The output layer evaluates $\sigma$ at a weighted sum of $\vetu^1,\ldots,\vetu^p$. In other words, the synaptic weights of the output neurons are given by the quaternionic matrix $U$ and their activation function is $\sigma$. Note that the first layer encodes the quaternion-valued vector $\vetx(t)$ of length $n$ into a real-valued vector $\vetw(t)$ of length $p$. Similarly, the next quaternion-valued state vector $\vetx(t+1)$, produced by the output layer, corresponds to the decoded version of the $p$-dimensional real-valued vector $\vetw(t)$.
\begin{figure}
\begin{center}
$$ \xymatrix@R0pt@C2pt{
{\color{blue} a)}&&&& {\color{blue} U^*} && {\color{blue}f \big( \Re \{\cdot\}/n \big)} && {\color{blue} U} && {\color{blue} \sigma (\cdot)} \\
x_1(t) \ar[rr] && {\bigcirc} \ar[rrrrd] \ar[rrrrddd] \ar[rrrrddddddd] &&&&&&&&
{\bigcirc} \ar[rr] && x_1 (t+1)\\
&&&&&& {\bigcirc} \ar[rrrru] \ar[rrrrd] \ar[rrrrddd] \ar[rrrrddddddd] &&&& \\
x_2(t) \ar[rr]&& {\bigcirc} \ar[rrrru] \ar[rrrrd] \ar[rrrrddddd] &&&&&&&&
{\bigcirc} \ar[rr] && x_2 (t+1)\\
&&&&&& {\bigcirc} \ar[rrrruuu] \ar[rrrru] \ar[rrrrd] \ar[rrrrddddd] &&&&&&\\
x_3(t) \ar[rr]&& {\bigcirc} \ar[rrrruuu] \ar[rrrru] \ar[rrrrddd]  &&&&&&&&
{\bigcirc} \ar[rr] && x_3 (t+1)\\
&&&&&& \vdots &&&&&&\\
\vdots && \vdots &&&&&&&& \vdots && \vdots\\
&&&&&& {\bigcirc} \ar[rrrruuuuuuu] \ar[rrrruuuuu] \ar[rrrruuu] \ar[rrrrd] &&&&\\
x_n(t) \ar[rr]&& {\bigcirc} \ar[rrrruuuuuuu] \ar[rrrruuuuu] \ar[rrrru] &&&&&&&&
{\bigcirc} \ar[rr] && x_n (t+1) \\
{\color{red} b)}&&&& {\color{red} U^*} && {\color{red}f \big( \Re \{\cdot\}/n \big)} && {\color{red} V} && {\color{red} \sigma (\cdot)}}$$
\caption{The network topology of quaternionic recurrent {\color{blue} correlation} and {\color{red} projection} neural networks.} \label{fig:topology}
\end{center}
\end{figure}
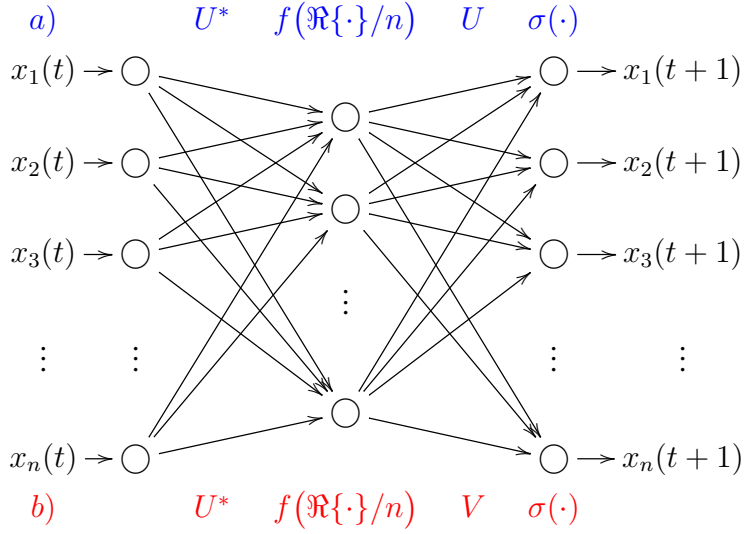

Examples of QRCNNs include the following straightforward quaternionic generalizations of the bipolar RCNNs:
\begin{enumerate}
 \item The {\em identity QRCNN} is obtained by considering in \eqref{eq:weights} the identity function $f_i(x)=x$. 
 \item The {\em high-order QRCNN}, which is determined by the function \bb \label{eq:f_h} f_h(x;q)=(1+x)^q, \quad q>1. \ee
 \item The {\em potential-function QRCNN}, which is obtained by considering in \eqref{eq:weights} the function 
 \bb \label{eq:f_p} f_p(x;L)=\frac{1}{(1-x+\varepsilon_p)^L}, \quad L \geq 1, \ee 
 where $\varepsilon_p>0$ is a small number\footnote{Like our previous works \cite{valle14nnB,valle18wcci}, we adopted the value $\varepsilon_p=\sqrt{\epsilon_{mach}}$, where $\epsilon_{mach}$ denotes the machine floating-point relative accuracy, in our computational implementation $f_p$.} introduced to avoid a division by zero when $x=1$.  
 \item The {\em exponential QRCNN}, which is determined by an exponential \bb \label{eq:f_e} f_e(x;\alpha)= e^{\alpha x}, \quad \alpha>0. \ee
\end{enumerate}

Note that QRCNNs generalize both bipolar and complex-valued RCNNs \cite{chiueh91,valle14nnB}. Precisely, the bipolar and the complex-valued models are obtained by considering vectors $\vetx=[x_1,\ldots,x_n]^T \in \mathbb{S}^n$ whose components satisfy respectively $x_j = {x_j}_0 + 0\ii + 0\jj + 0\kk$ and $x_j = {x_j}_0+{x_j}_1\ii+0\jj+0\kk$ for all $j=1,\ldots,n$. Furthermore, the identity QRCNN generalizes the traditional bipolar correlation-based Hopfield neural network but it does not generalize the correlation-based QHNN. Indeed, in contrast to the correlation-based QHNN, the identity QRCNN uses only the real part of the inner product $\sprod{\vetx(t)}{\vetu^\xi}$. 

Let us briefly turn our attention to the high-order, potential-function, and the exponential excitation functions. Fixed the first argument $x$, the functions $f_h$, $f_p$, and $f_e$ are all exponential with respect to their parameters. Precisely, these three excitation functions belong to the following family of parametric functions:
\begin{align} \label{eq:familyF} \mathcal{F} = \{& f(x;\lambda): f(x;\lambda) = [A(x)]^\lambda \; \mbox{for} \; \lambda\geq 0 \; \mbox{and a continuous function $A$} \\ \nonumber &\qquad  \mbox{such that} \; 0 \leq A(x_1) < A(x_2) \; \mbox{when} \; -1 \leq x_1 < x_2 \leq 1\}.\end{align}
The family $\mathcal{F}$ is particularly important for the implementation of associative memories  \cite{valle18wcci}. Precisely, if the excitation function belongs to the family $\mathcal{F}$, then a QRCNN can reach the storage capacity of an ideal associative memory by choosing a sufficiently large parameter $\lambda$. On the downside, overflow imposes an upper bound on the parameter $\lambda$ of an excitation function $f \in \mathcal{F}$ which may the limit application of a QRCNN as associative memory.  

Finally, we would like to point out that,  independently of the initial state $\vetx(0) \in \mathbb{S}^n$ and the update mode (synchronous or asynchronous), a QRCNN model always yields a convergent sequence $\{\vetx(t)\}_{t \geq 0}$ \cite{valle18wcci}. Therefore, QRCNNs are potential models to implement associative memories. Morevoer, due to the non-linearity of the activation functions of the hidden neurons, the high-order, potential-function, and exponential QRCNNs may overcome the rotational invariance problem found on quaternionic Hopfield neural network \cite{kobayashi16a}. On the downside, such as the correlation-based quaternionic Hopfield network, QRCNNs may suffer from cross-talk between the fundamental memories $\vetu^1,\ldots,\vetu^p$. Inspired by the projection rule, the next section introduces improved models which overcome the cross-talk problem of the QRCNNs.

\section{Quaternion-Valued Recurrent Projection Neural Networks}
\label{sec:QRPNNs}

Quaternion-valued recurrent projection neural networks (QRPNNs) combine the main idea behind the projection rule and the QRCNN models to yield high capacity associative memories. Specifically, using the synaptic weights $w_{ij}^p$ given by \eqref{eq:projection}, the activation potential of the $i$th neuron at time $t$ of the projection-based QHNN is
\begin{align*}
    a_i(t) &= \sum_{j=1}^n w_{ij}^p x_j(t) 
    = \sum_{j=1}^n \left[\frac{1}{n} \sum_{\eta=1}^p \sum_{\xi=1}^p u_i^\eta c_{\eta \xi}^{-1} \bar{u}_j^\xi \right] x_j(t) \\
    & = \sum_{\eta=1}^p \sum_{\xi=1}^p u_i^\eta  c_{\eta \xi}^{-1} \left[\frac{1}{n} \sum_{j=1}^n \bar{u}_j^\xi x_j(t) \right] \\
    & = \sum_{\xi=1}^p \left( \sum_{\eta=1}^p u_i^\eta c_{\eta \xi}^{-1} \right) \left[\frac{1}{n} \sprod{\vetx(t)}{\vetu^\xi} \right].
\end{align*} 
In analogy to the QRCNN, we replace the term proportional to the inner product between $\vetx(t)$ and $\vetu^\eta$ by the weight $w_\xi(t)$ given by \eqref{eq:weights}. Accordingly, we define $c^{-1}_{\eta \xi}$ as the $(\eta,\xi)$-entry of the inverse of the real-valued matrix $C \in \mathbb{R}^{p \times p}$ given by
\bb \label{eq:Cmatrix} c_{\eta\xi} = f \left( \frac{1}{n}  \re{\sprod{\vetu^\xi}{\vetu^\eta}} \right), \quad \forall \eta,\xi \in \{1,\ldots,p\}.  \ee
Furthermore, to simplify the computation, we define 
\bb \label{eq:vs} v_i^\xi = \sum_{\eta=1}^p u_i^\eta c_{\eta \xi}^{-1},\ee 
for all $i=1,\ldots, n$ and $\eta = 1,\ldots,p$. Thus, the activation potential of a QRPNN is given by
\bb \label{eq:QRPNN} a_i(t)=\sum_{\xi=1}^p w_\xi(t) v_i^\xi, \quad \forall i=1,\ldots,n.\ee

Concluding, given a fundamental memory set $\mathcal{U} = \{\vetu^1,\ldots,\vetu^p\} \subset \mathbb{S}^n$, define the $p \times p$ real-valued matrix $C$ by means of \eqref{eq:Cmatrix} and compute the quaternion-valued vectors $\vetv^1,\ldots,\vetv^p$ using \eqref{eq:vs}. Like the QRCNN, given an input vector $\vetx(0)\in \mathbb{S}^n$, a QRPNN yields the sequence $\{\vetx(t)\}_{t\geq 0}$ by means of \eqref{eq:update} where the activation potential of the $i$th output neuron at time $t$ is given by \eqref{eq:QRPNN} with $w_\xi(t)$ defined by \eqref{eq:weights}. 

Alternatively, using a matrix-vector notation, a synchronous QRPNN can be described as follows: Let $U = [\vetu^1,\ldots,\vetu^p] \in \mathbb{S}^{n \times p}$ be the quaternion-valued matrix whose columns corresponds to the fundamental memories. Define the real-valued matrix $C \in \mathbb{R}^{p \times p}$ and the quaternion-valued matrix $V \in \mathbb{H}^{n \times p}$ by means of the equations
\bb \label{eq:CandV} C= f(U^* U/n) \quad \mbox{and} \quad V = U C^{-1},\ee
where the excitation function $f$ is evaluated in an entry-wise manner. Note that the two equations in \eqref{eq:CandV} are equivalent to \eqref{eq:Cmatrix} and \eqref{eq:vs}, respectively. Given the initial state $\vetx(0)$, a QRCNN defines recursively
\bb \vetw(t) = f\big(\re{U^* \vetx(t)}/n\big), \ee and \bb \vetx(t+1) = \sigma\big(V \vetw(t)\big), \ee
where $f:[-1,1] \to \R$ and $\sigma:\mathbb{H}^* \to \mathbb{S}$ are evaluated in a component-wise manner. Algorithm \ref{alg:QRPNN}, formulated using matrix notation, summarizes the implementation of a QRPNN using synchronous update. We would like to point out that a QRCNN is obtained by setting $V=U$ in Algorithm \ref{alg:QRPNN}. 

\begin{algorithm}[t] \label{alg:QRPNN}
\KwData{\begin{enumerate}
    \item A continuous and non-decreasing real-valued function $f:\R \to \R$.
    \item Matrices $U = [\vetu^1,\ldots,\vetu^p]$ and $V = [\vetv^1,\ldots,\vetv^p]$.
    \item The input vector $\vetx = [x_1,\ldots,x_n]$.
    \item Maximum number of iterations $t_{\max}$ and a tolerance $\tau>0$.
\end{enumerate}
}
\KwResult{Retrieved vector $\vety$.}

\BlankLine



Initialize $t=0$ and $\Delta=\tau+1$.
\BlankLine
\While{$t \leq t_{\max}$ \textbf{and} $\Delta \geq \tau$}{
1. Compute the weights \[w_\xi= f\left(\frac{1}{n}\re{U^* \vetx} \right). \]

2. Compute the next state 
\[ \vety = \sigma(V \vetw).\]

3. Update respectively $t \leftarrow t+1$, $\Delta \leftarrow \|\vety-\vetx\|$, and $\vetx \leftarrow \vety$.
}
\caption{Quaternion-valued Recurrent Projection Neural Network}
\end{algorithm}

Like the QRCNN, a QRPNN is also implemented by the fully connected two layer neural network with $p$ hidden neurons shown in Figure \ref{fig:topology}b). The difference between the QRCNN and the QRPNN is the synaptic weight matrix of the output layer. In other words, they differ in the way the real-valued vector $\vetw(t)$ is decoded to yield the next state $\vetx(t)$. 

From the computational point of view, although the training phase of a QRPNN requires $\mathcal{O}(p^3 + n p^2)$ operations to compute the matrices $C^{-1}$ and $V$, they usually exhibit better noise tolerance than the QRCNNs. Moreover, the following theorem shows that QRCNNs overcome the cross-talk between the fundamental memories if the matrix $C$ is invertible. Precisely, the following
theorem shows that all the fundamental memories are stationary states of a QRPNN if the matrix $C$ is invertible.

\begin{theorem} \label{thm:fixed_points}
Given a fundamental memory set $\mathcal{U}=\{\vetu,\ldots,\vetu^p\}$, define the real-valued $p \times p$-matrix $C$ by \eqref{eq:Cmatrix}. If $C$ is invertible, then all fundamental memories $\vetu^1,\ldots,\vetu^p$ are stationary states of a QRPNN defined by \eqref{eq:update}, \eqref{eq:weights}, and \eqref{eq:QRCNN}. 
\end{theorem}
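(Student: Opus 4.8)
The plan is to verify the fixed-point condition directly. Fix an index $\gamma \in \{1,\ldots,p\}$, set the current state equal to the fundamental memory, $\vetx(t) = \vetu^\gamma$, and show that one application of the update rule \eqref{eq:update} returns $\vetx(t+1) = \vetu^\gamma$. Since $\sigma$ normalizes its argument and fixes unit quaternions, it suffices to prove that each activation potential $a_i(t)$ equals $u_i^\gamma$; this is already a unit quaternion, so $0 < |a_i(t)| < +\infty$ and \eqref{eq:update} forces $x_i(t+1) = \sigma(a_i(t)) = u_i^\gamma$.

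First I would evaluate the weight vector. Substituting $\vetx(t) = \vetu^\gamma$ into \eqref{eq:weights} gives $w_\xi(t) = f\big(\frac{1}{n}\re{\sprod{\vetu^\gamma}{\vetu^\xi}}\big)$. Comparing this against the definition \eqref{eq:Cmatrix} of $C$ and matching indices carefully, one identifies $w_\xi(t) = c_{\xi\gamma}$, i.e.\ the weight vector is exactly the $\gamma$th column of $C$. (Incidentally, $\re{\sprod{\vetu^\xi}{\vetu^\eta}} = \re{\sprod{\vetu^\eta}{\vetu^\xi}}$ because $\re{\bar q p} = \re{\bar p q}$ for quaternions, so $C$ is symmetric and it is immaterial whether one reads $w_\xi(t)$ as a row or a column of $C$.)

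Next I would substitute into the QRPNN activation potential. Using \eqref{eq:QRPNN} together with the definition \eqref{eq:vs} of $v_i^\xi$, I would write
\[ a_i(t) = \sum_{\xi=1}^p w_\xi(t)\, v_i^\xi = \sum_{\xi=1}^p c_{\xi\gamma} \sum_{\eta=1}^p u_i^\eta\, c_{\eta\xi}^{-1} = \sum_{\eta=1}^p u_i^\eta \left( \sum_{\xi=1}^p c_{\eta\xi}^{-1}\, c_{\xi\gamma} \right), \]
and then invoke the invertibility hypothesis: the inner sum is the $(\eta,\gamma)$ entry of $C^{-1}C = I$, hence equals $\delta_{\eta\gamma}$. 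The potential collapses to $a_i(t) = u_i^\gamma$, which is the crux of the argument. Equivalently, in the matrix notation of \eqref{eq:CandV} the same computation reads $V\vetw(t) = U C^{-1}(C e_\gamma) = U e_\gamma = \vetu^\gamma$, where $e_\gamma$ is the $\gamma$th canonical basis vector; this is perhaps the cleanest way to present it.

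Finally I would close the argument: since $a_i(t) = u_i^\gamma$ is a unit quaternion for every $i$, the update \eqref{eq:update} yields $x_i(t+1) = \sigma(u_i^\gamma) = u_i^\gamma$, so $\vetu^\gamma$ is a stationary state, and as $\gamma$ was arbitrary this holds for every fundamental memory. I do not anticipate a serious obstacle; the only points demanding care are the index bookkeeping in the non-commutative inner product (so that $w_\xi(t)$ is aligned with the correct column of $C$ rather than its transpose) and the explicit appeal to the invertibility of $C$, without which the cancellation $C^{-1}C = I$ that drives the whole proof is simply unavailable.
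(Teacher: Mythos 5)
Your proposal is correct and follows essentially the same route as the paper's own proof: evaluate the weights at $\vetx(t)=\vetu^\gamma$ to get $w_\xi(t)=c_{\xi\gamma}$, substitute into the activation potential, and use $C^{-1}C=I$ to collapse the double sum to $a_i(t)=u_i^\gamma$. Your added remarks (the symmetry of $C$, the matrix identity $V\vetw = UC^{-1}Ce_\gamma = \vetu^\gamma$, and the explicit check that $0<|a_i(t)|<+\infty$ so that \eqref{eq:update} applies) are welcome refinements but do not change the argument.
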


\begin{proof}
Let us assume that the matrix $C$ given by \eqref{eq:Cmatrix} is invertible. Also, suppose a QRPNN is initialized at a fundamental memory, that is, $\vetx(0)=\vetu^\gamma$ for some $\gamma \in \{1,\ldots,p\}$. From \eqref{eq:weights} and \eqref{eq:Cmatrix}, we conclude that 
\[ w_\xi(0) = f \big( \re{\sprod{\vetu^\gamma}{\vetu^\xi}}/n \big) = c_{\xi \gamma}, \quad \forall \xi =1,\ldots,p.\]
Furthermore, from \eqref{eq:QRPNN} and \eqref{eq:vs}, we obtain the following identities for any $i \in \{1,\ldots,n\}$:
\begin{align*}
    a_i(0) &= \sum_{\xi=1}^p w_\xi(0) v_i^\xi 
    = \sum_{\xi=1}^p \left( \sum_{\eta=1}^p u_i^\eta c_{\eta \xi}^{-1} \right) c_{\xi \gamma} \\
    &= \sum_{\eta=1}^p u_i^\eta \left( \sum_{\xi=1}^p  c_{\eta \xi}^{-1} c_{\xi \gamma}  \right) 
    = \sum_{\eta=1}^p u_i^\eta \delta_{\eta \gamma} = u^\gamma_i,
\end{align*}
where $\delta_{\eta\gamma}$ is the Kronecker delta, that is, $\delta_{\eta \gamma} = 1$ if $\eta = \gamma$ and $\delta_{\eta \gamma}=0$ if $\eta \neq \gamma$. From \eqref{eq:update}, we conclude that the fundamental memory $\vetu^\gamma$ is a fixed point of the QRPNN if the matrix $C$ is invertible.
\end{proof}

Let us investigate further the relationship between QRPNNs and QRCNNs. Theorem \ref{thm:fixed_points} shows that QRPNNs can be used to implement an associative memory whenever the matrix $C$ is invertible. It turns out that QRCNNs can also be used to implement an associative memory using an excitation function $f \in \mathcal{F}$ with a sufficiently large parameter $\lambda$ \cite{valle18wcci}. Assuming $f \in \mathcal{F}$, the following theorem shows that the matrix $C$ given by \eqref{eq:Cmatrix} is invertible if the parameter $\lambda$ is sufficiently large. Moreover, the QRPNN and the QRCNN coincide in this case. 

\begin{theorem} \label{thm:RPNNxRCNN}
Consider a fundamental memory set $\mathcal{U} =\{\vetu^1,\ldots,\vetu^p\} \subset \mathbb{S}^n$ and  
an excitation function $f \in \mathcal{F}$, where $\mathcal{F}$ is the family of parameterized functions given by \eqref{eq:familyF}. The matrix $C$ given by \eqref{eq:Cmatrix} is invertible for a parameter $\lambda$ sufficiently large. Furthermore, given an arbitrary state vector $\vetx \in \mathbb{S}^n$, let $\vetx_P$ and $\vetx_C$ denote respectively the states of the QRPNN and QRCNN models after one single synchronous update, that is, 
\bb \vetx_P = \sigma( V \vetw) \qeq \vetx_C = \sigma(U \vetw), \ee 
where $\vetw = f(\re{U^*\vetx}/n;\lambda)$. In this case, $\vetx_P$ approaches $\vetx_C$ as $\lambda$ tends to infinity. Formally, we have
\bb \label{eq:limit} \lim_{\lambda \to \infty} \| \vetx_P - \vetx_C \|_2 = 0, \ee 
where $\|\cdot\|_2 = \sqrt{\sprod{\vetx}{\vetx}}$ denotes the Euclidean norm.
\end{theorem}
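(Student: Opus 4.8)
The plan is to treat the two assertions in turn, establishing the invertibility of $C$ first and then the limit \eqref{eq:limit}. Write $m_{\eta\xi} = \frac{1}{n}\re{\sprod{\vetu^\xi}{\vetu^\eta}}$ and $A_1 = A(1)$, so that $c_{\eta\xi} = [A(m_{\eta\xi})]^\lambda$ since $f(\cdot;\lambda) = [A(\cdot)]^\lambda \in \mathcal{F}$. Because $\sprod{\vetu^\eta}{\vetu^\eta} = n$, the diagonal satisfies $m_{\eta\eta} = 1$; and because the fundamental memories are distinct unit vectors, the entrywise bound $\re{\bar u^\eta_i u^\xi_i} \le 1$, strict in at least one coordinate when $\vetu^\eta \neq \vetu^\xi$, gives $m_{\eta\xi} < 1$ for $\eta \neq \xi$. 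As $A$ is nonnegative and strictly increasing, this yields $0 \le A(m_{\eta\xi}) < A_1$ for $\eta \neq \xi$ and $A_1 > 0$. First I would normalize $C$ by setting $\tilde{C} = A_1^{-\lambda} C$, whose entries are $1$ on the diagonal and $[A(m_{\eta\xi})/A_1]^\lambda$ off it. Each off-diagonal base lies in $[0,1)$, so $\tilde{C} \to I$ as $\lambda \to \infty$. By continuity of the determinant, $\det \tilde{C} \to 1 \neq 0$, whence $\tilde{C}$, and therefore $C = A_1^\lambda \tilde{C}$, is invertible for all sufficiently large $\lambda$; continuity of matrix inversion at $I$ also gives $\tilde{C}^{-1} \to I$.

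For the limit, the key tool is the positive-scalar invariance $\sigma(cq) = \sigma(q)$ for $c > 0$, applied entrywise. Since $V = UC^{-1} = A_1^{-\lambda} U \tilde{C}^{-1}$, pulling the positive scalar $A_1^{-\lambda}$ out of each component gives $\vetx_P = \sigma(V\vetw) = \sigma(U\tilde{C}^{-1}\vetw)$, while $\vetx_C = \sigma(U\vetw)$. Next I would tame the growth of $\vetw$. With $s_\xi = \frac{1}{n}\re{\sprod{\vetx}{\vetu^\xi}}$ we have $w_\xi = [A(s_\xi)]^\lambda$; setting $a_{\max} = \max_\xi A(s_\xi) > 0$ and $\hat{\vetw} = a_{\max}^{-\lambda}\vetw$, the components $\hat{w}_\xi = [A(s_\xi)/a_{\max}]^\lambda$ lie in $[0,1]$ and, by the strict monotonicity of $A$, converge to the indicator $\vetw^*$ of the set of maximizers $S^* = \{\xi : s_\xi = \max_\zeta s_\zeta\}$. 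Dividing once more by the positive scalar $a_{\max}^\lambda$ inside $\sigma$ yields
\[ \vetx_P = \sigma\big(U\tilde{C}^{-1}\hat{\vetw}\big) \qeq \vetx_C = \sigma\big(U\hat{\vetw}\big). \]

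Now both arguments converge to the same quaternion-valued vector: since $\tilde{C}^{-1}\to I$ and $\hat{\vetw}\to\vetw^*$, we obtain $U\tilde{C}^{-1}\hat{\vetw}\to U\vetw^*$ and $U\hat{\vetw}\to U\vetw^*$. If each component of $U\vetw^*$ is a nonzero quaternion, then $\sigma$ is continuous at each of them, so $\vetx_P$ and $\vetx_C$ both converge to $\sigma(U\vetw^*)$ and hence $\|\vetx_P - \vetx_C\|_2 \to 0$, which is \eqref{eq:limit}. In the generic situation of a unique maximizer, $S^* = \{\xi^*\}$ and $U\vetw^* = \vetu^{\xi^*}$, whose components are unit quaternions and therefore nonzero, so the hypothesis is automatic.

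The step I expect to be the main obstacle is precisely this nonvanishing requirement. Because $\sigma(q) = q/|q|$ is singular at $q = 0$, the continuity argument breaks down at any coordinate $i$ for which $(U\vetw^*)_i = \sum_{\xi \in S^*} u_i^\xi = 0$; such cancellation can occur only in the degenerate case of tied maximizers $s_\xi$. I would therefore either restrict to the generically valid unique-maximizer case, or handle a tied coordinate separately by showing that the corresponding entries of $U\tilde{C}^{-1}\hat{\vetw}$ and $U\hat{\vetw}$ stay close enough that their normalizations still agree in the limit, invoking the convention in \eqref{eq:update} that a neuron with vanishing activation retains its current state.
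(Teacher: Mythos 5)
Your proposal is correct, and its first half (normalizing $C$ to $\tilde{C}=A_1^{-\lambda}C$, showing $\tilde{C}\to I$, and invoking continuity of the determinant/inverse) is essentially identical to the paper's argument for invertibility. The second half, however, takes a genuinely different and in fact more careful route. The paper also normalizes the weight vector by $A_1^{\lambda}$, obtaining $\lim_{\lambda\to\infty}A_1^{-\lambda}\vetw=\boldsymbol{e}_\xi$ when $\vetx=\vetu^\xi$ and $0$ otherwise, and then concludes by substituting $\boldsymbol{e}_\xi$ into both terms; as written, that final step only covers the case where the probe $\vetx$ is itself a fundamental memory, since for a general $\vetx$ the normalized weights tend to $0$, where $\sigma$ is undefined and the limit cannot be passed inside. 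Your choice to normalize instead by $a_{\max}^{\lambda}$ with $a_{\max}=\max_\xi A(s_\xi)$, combined with the explicit use of the positive-scalar invariance $\sigma(cq)=\sigma(q)$, produces a nonzero limiting argument $U\vetw^*$ for (generically) arbitrary $\vetx\in\mathbb{S}^n$, so your argument actually establishes the theorem in the generality in which it is stated. You are also right to flag the degenerate case of tied maximizers with $\sum_{\xi\in S^*}u_i^\xi=0$: neither your proof nor the paper's resolves it, but the paper does not even acknowledge it, whereas you propose a concrete fallback via the convention in \eqref{eq:update}. One minor point worth a sentence in a polished write-up: $a_{\max}>0$ can fail only in the vacuous situation $\vetw=0$ (all $s_\xi=-1$ with $A(-1)=0$), where the update itself is undefined, so excluding it is harmless.
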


\begin{proof}
First of all, recall that $f(x;\lambda) = [A(x)]^\lambda$ because $f \in \mathcal{F}$. Let $A_1 = A(1)>0$ denote the maximum value of the function $A$. Also, note that 
\[ \re{\sprod{\vetx}{\vety}} = n - \frac{1}{2} \| \vetx -\vety \|_2^2, \quad \forall \vetx,\vety \in \mathbb{S}^n.\]
As a consequence, $\re{\sprod{\vetx}{\vety}}/n = 1$ if and only if $\vetx = \vety$ and $\re{\sprod{\vetx}{\vety}}<1$ if $\vetx \neq \vety$. Therefore, an entry of the matrix $C$ given by \eqref{eq:Cmatrix} satisfies
\begin{align*}
\lim_{\lambda \to \infty} \frac{c_{\eta \xi}}{A_1^\lambda} 
&= \lim_{\lambda \to \infty} \frac{1}{A_1^\lambda} f \left(\frac{1}{n}\re{\sprod{\vetu^\xi}{\vetu^\eta}}; \lambda \right) \\
&=  \lim_{\lambda \to \infty} \left[ \frac{A\left(\re{\sprod{\vetu^\xi}{\vetu^\eta}}/n \right)}{A_1} \right]^\lambda \\ 
&= \begin{cases}
1, & \xi = \eta, \\
0, & \mbox{otherwise.}
\end{cases}    
\end{align*}
Equivalently, 
\bb \label{eq:inverseC} 
\lim_{\lambda \to \infty} \frac{1}{A_1^\lambda}C =
\lim_{\lambda \to \infty} \left(\frac{1}{A_1^\lambda}I \right) C = I,\ee
where $I$ denotes the identity matrix. In a similar fashion, we conclude that
\bb \label{eq:limit_w} \lim_{\lambda \to \infty} \frac{\vetw}{A_1^\lambda} = \begin{cases}
\boldsymbol{e}_\xi, & \vetx = \vetu^\xi, \\
0, & \mbox{otherwise},
\end{cases} \ee 
where $\boldsymbol{e}_\xi$ denotes the $\xi$th column of the $p \times p$ identity matrix. Since the matrix product is continuous and the inverse of $C$ is unique, $C^{-1}$ exists and approaches $A_1^{-\lambda} I$ as $\lambda$ increases. In other words, \eqref{eq:inverseC} ensures the existence of $C^{-1}$ for $\lambda$ sufficiently large.

Let us now show \eqref{eq:limit}. To this end, let us define $\vetz = C^{-1} \vetw$ or, equivalently, $\vetw = C \vetz$. From \eqref{eq:inverseC} and \eqref{eq:limit_w}, we conclude that
\begin{align*}
\lim_{\lambda \to \infty} \vetz 
&= \left[\lim_{\lambda \to \infty} \left(\frac{1}{A_1^\lambda}C \right)\right] \left[ \lim_{\lambda \to \infty} \vetz \right] 
= \lim_{\lambda \to \infty} \left(\frac{1}{A_1^\lambda}C \right)\vetz 
= \lim_{\lambda \to \infty} \frac{1}{A_1^\lambda}(C \vetz) \\ 
& = \lim_{\lambda \to \infty} \frac{\vetw}{A_1^\lambda}
= \boldsymbol{e}_\xi 
.\end{align*}
%
%
Now, recalling that $V=UC^{-1}(\lambda)$, $\|\cdot\|$ and $\sigma$ are continuous, and $A_1 > 0$, we obtain
\begin{align*}
& \lim_{\lambda \to \infty} \|\vetx_P - \vetx_C\|_2
= \lim_{\lambda \to \infty} \| \sigma\big(V\vetw\big) - \sigma\big(U\vetw\big)\|_2 \\
&= \lim_{\lambda \to \infty} \left\| \sigma\big(U C^{-1}\vetw\big) - \sigma\left(\frac{1}{A_1^\lambda} U  \vetw\right) \right\|_2 \\
&= \left\| \sigma\left(U \left(\lim_{\lambda \to \infty} C^{-1} \vetw \right)\right) - \sigma\left(U\left(\lim_{\lambda \to \infty} \frac{1}{A_1^\lambda}\vetw\right) \right) \right \|_2 \\
&= \left\| \sigma\left(U \left(\lim_{\lambda \to \infty} \vetz \right)\right) - \sigma\left(U\left(\lim_{\lambda \to \infty} \frac{1}{A_1^\lambda}\vetw\right) \right) \right \|_2 \\
&= \left\| \sigma\left(U \boldsymbol{e}_\xi \right) - \sigma\left(U\boldsymbol{e}_\xi  \right) \right \|_2 
=0.
\end{align*}
The last identity concludes the proof of the theorem.
\end{proof}

We would like to point out that the basic idea behind Theorem \ref{thm:RPNNxRCNN} is that the matrix $C$ given by \eqref{eq:Cmatrix} can be approximated by a multiple of the identity matrix for a sufficiently large parameter $\lambda$ of an excitation function $f \in \mathcal{F}$. Borrowing the terminology from \cite{perfetti08}, we say that a QRCNN as well as a QRPNN are in saturated mode if the matrix $C$ can be approximated by $c I$ for some $c>0$. In the saturated mode, QRCNNs and QRPNNs coincide. 


In analogy to the QRCNNs, the identity mapping and the functions $f_h$, $f_p$, and $f_e$ given by \eqref{eq:f_h}, \eqref{eq:f_p}, and \eqref{eq:f_e} are used to define respectively the {\em identity QRPNN}, the {\em high-order QRPNN}, the {\em potential-function QRPNN}, and the {\em exponential QRPNN}. 

Note that the identity QRPNN generalizes the traditional bipolar projection-based Hopfield neural network. The identity QRPNN, however, does not generalize the projection-based QHNN because the former uses only the real part of the inner product between $\vetx(t)$ and $\vetu^\xi$. In fact, in contrast to the projection-based QHNN, the design of a QRPNN does not require the inversion of a quarternion-valued matrix but only the inversion of a real-valued matrix. 

\subsection{Bipolar RPNNs and Recurrent Kernel Associative Memories} \label{sec:RKAM}

As pointed out previously, quaternion-valued RPNNs reduce to bipolar models when the fundamental memories are all real-valued, that is, their vector part is zero. In this subsection, we address the relationship between bipolar RPNNs and the recurrent kernel associative memories (RKAMs) proposed by Garcia and Moreno \cite{garcia04a,garcia04b} and further investigated by Perfetti and Ricci \cite{perfetti08}. 

A RKAM model is defined as follows. Let $\kappa$ denote an inner-product kernel and $\rho>0$ be a user-defined parameter. Given a fundamental memory set $\mathcal{U}=\{\vetu^1,\ldots,\vetu^p\} \subseteq \{-1,+1\}^n$, define the Lagrange multipliers vector $ \betav_{i} = [ \beta_{i1},\ldots, \beta_{ip}]$ as the solution of the following quadratic problem for $i=1,\ldots,n$:
\bb  \label{eq:QPwBC}
\begin{cases}
\mbox{minimize} & \displaystyle{Q( \betav_i) =  \frac{1}{2}\sum_{\xi,\eta = 1}^p  \beta_i^\xi  \beta_i^\eta u_i^\xi u_i^\eta \kappa(\vetu^\xi, \vetu^\eta)} - \sum_{\xi=1}^p  \beta_{i}^{\xi},\\ 
\mbox{subject to} & \displaystyle{0 \leq  \beta_{i}^{\xi} \leq \rho}, \; \forall \xi=1,\ldots,p.
\end{cases}
\ee 
Then, given a bipolar initial state $\vetx(0) \in \{-1,+1\}^n$, a RKAM evolves according to the following equation for all $i=1,\ldots,n$:
\bb \label{eq:RKAM} x_i(t+1) = \sgn\left(\sum_{\xi=1}^p  \beta_{i}^{\xi} u_i^\xi \kappa\big(\vetu^\xi,\vetx(t)\big) \right). \ee

Note that $x_i(t+1)$, the next state of the $i$th neuron of a RKAM, corresponds to the output of a support vector machine (SVM) classifier without the bias term determined using the training set $\mathcal{T}_i = \{(\vetu^\xi,u_i^\xi): \xi=1,\ldots,p\}$. Therefore, the design of a RKAM requires, in some sense, training $n$ independent support vector classifiers (one SVM for each output neuron of the RKAM!). Furthermore, like the soft-margin SVM, the used-defined parameter $\rho$ controls the trade-off between the training error and the separation margin \cite{scholkopf02}. In the associative memory context, the larger the parameter $\rho$ is, the larger the storage capacity of the RKAM. Conversely, some fundamental memories may fail to be stationary states of the RKAM if $\rho$ is small.

Let us now compare the RKAMs with bipolar RCNNs and RPNNs. To this end, let us assume the excitation function $f$ is a valid kernel. The exponential function $f_e$, for example, yields a valid kernel, namely the Gaussian radial-basis function kernel \cite{perfetti08}. As pointed out by Perfetti and Ricci \cite{perfetti08}, the main difference between a RKAM and a RCAM is the presence of the Lagrange multipliers in the former. Precisely, the Lagrange multipliers are $ \beta_i^\xi=1$ in the RCNNs while, in the RKAM, they are obtained solving \eqref{eq:QPwBC}. Furthermore, the RKAM is equivalent to the corresponding bipolar RCNN in the saturation model, that is, when the matrix $C$ given by \eqref{eq:Cmatrix} exhibits a diagonal dominance \cite{perfetti08}. 
In a similar fashion, we observe that a RKAM and a RPNN coincide if $v_i^\xi =  \beta_i^\xi u_i^\xi$ for all $i=1,\ldots,n$ and $\xi=1,\ldots,p$. The following theorem shows that this equation holds true if all the constraints are inactive at the solution of the quadratic problem \eqref{eq:QPwBC}.

\begin{theorem} \label{thm:RKAMxRPNN}
Let $f:[-1,+1]\to \R$ be a continuous and non-decreasing function such that the following equation yields a valid kernel
\bb \label{eq:f2kernel} \kappa(\vetx,\vety) = f\left(\frac{ \sprod{\vetx}{\vety} }{n}\right), \quad \forall \vetx,\vety \in \{-1,+1\}^n.\ee
Consider a fundamental memory set $\mathcal{U}=\{\vetu^1,\ldots,\vetu^p\} \subset \{-1,+1\}^n$ such that the matrix $C$ given by \eqref{eq:Cmatrix} is invertible. If the solutions of the quadratic problem defined by \eqref{eq:QPwBC} satisfy $0< \beta_i^\xi<C$ for all $i=1,\ldots,n$ and $\xi=1,\ldots,p$, then the RKAM defined by \eqref{eq:QPwBC} and \eqref{eq:RKAM} coincide with the RPNN defined by \eqref{eq:update}, \eqref{eq:weights}, and \eqref{eq:QRPNN}. Alternatively, the RKAM and the bipolar RPNN coincide if the vectors $\vetv^\xi$'s given by \eqref{eq:vs} satisfy $0<v_i^\xi u_i^\xi < \rho$ for all $i=1,\ldots,n$ and $\xi=1,\ldots,p$.
\end{theorem}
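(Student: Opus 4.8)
The plan is to show that, in the bipolar case, the one-step maps of the RKAM and of the RPNN are literally the same function, and that this reduces to the single algebraic identity $v_i^\xi = \beta_i^\xi u_i^\xi$. First I would specialize the general formulas to real-valued memories: since each $\vetu^\xi \in \{-1,+1\}^n$, the activation $\sigma$ reduces to $\sgn$ on the reals, the real part in \eqref{eq:weights} is vacuous, the inner products are real and symmetric, and the weight becomes $w_\xi(t) = f(\sprod{\vetx(t)}{\vetu^\xi}/n) = \kappa(\vetu^\xi,\vetx(t))$. Consequently the RPNN activation \eqref{eq:QRPNN} is $\sum_\xi v_i^\xi \kappa(\vetu^\xi,\vetx(t))$ while the RKAM activation \eqref{eq:RKAM} is $\sum_\xi \beta_i^\xi u_i^\xi \kappa(\vetu^\xi,\vetx(t))$; both are fed to $\sgn$, so the two recurrences agree for every input as soon as $v_i^\xi = \beta_i^\xi u_i^\xi$ for all $i,\xi$. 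The whole theorem therefore rests on establishing this identity.

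Next I would characterize the minimizer of the quadratic program \eqref{eq:QPwBC}. Because $f$ induces a valid kernel, the Gram matrix $[\kappa(\vetu^\xi,\vetu^\eta)]$ coincides (by symmetry) with $C$ and is positive semidefinite, and the Hessian of $Q(\betav_i)$ equals $D_i C D_i$ with $D_i = \mathrm{diag}(u_i^1,\ldots,u_i^p)$. This is congruent to $C$, hence positive semidefinite, so $Q$ is convex, and strictly convex once $C$ is invertible. Thus the KKT conditions are necessary and sufficient for the global minimizer. Under the hypothesis $0 < \beta_i^\xi < \rho$ no box constraint is active, the multipliers of the inequality constraints vanish, and optimality reduces to the interior stationarity condition $\partial Q/\partial \beta_i^\zeta = 0$ for every $\zeta$.

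I would then compute this gradient. Differentiating \eqref{eq:QPwBC} and using $c_{\zeta\eta} = \kappa(\vetu^\zeta,\vetu^\eta)$ gives $u_i^\zeta \sum_\eta \beta_i^\eta u_i^\eta c_{\zeta\eta} - 1 = 0$. Multiplying by $u_i^\zeta$, using $(u_i^\zeta)^2 = 1$, and writing $g_i^\eta := \beta_i^\eta u_i^\eta$ turns this into the linear system $\sum_\eta c_{\zeta\eta} g_i^\eta = u_i^\zeta$, i.e. $C\boldsymbol{g}_i = \boldsymbol{u}_i$. Since $C$ is invertible and symmetric, $g_i^\eta = \sum_\xi c^{-1}_{\eta\xi} u_i^\xi = \sum_\xi u_i^\xi c^{-1}_{\xi\eta}$, which is exactly $v_i^\eta$ by the definition \eqref{eq:vs}. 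Hence $v_i^\xi = g_i^\xi = \beta_i^\xi u_i^\xi$, proving the first claim. For the alternative statement I would run the computation in reverse: set $\beta_i^\xi := v_i^\xi u_i^\xi$, verify from \eqref{eq:vs} and $C^{-1}C = I$ that it solves the stationarity equation, and note that $0 < v_i^\xi u_i^\xi < \rho$ places it strictly inside the box, so by strict convexity it is the unique minimizer of \eqref{eq:QPwBC}.

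The routine parts are the gradient computation and the linear-algebra inversion; the step requiring care is the justification that an interior stationary point is the global optimum. This is where the valid-kernel hypothesis is essential: it guarantees convexity of $Q$ through positive semidefiniteness of $C$, preserved under the congruence by the diagonal sign matrix $D_i$, while invertibility of $C$ upgrades this to strict convexity so that the KKT point is the unique minimizer and the identification with $v_i^\xi$ is unambiguous. I would also be careful that the symmetry of $C$, and hence of $C^{-1}$, is precisely what lets the $(\eta,\xi)$-indexing in \eqref{eq:vs} line up with the $\eta$th component of $C^{-1}\boldsymbol{u}_i$.
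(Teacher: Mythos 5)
Your proof is correct and follows essentially the same route as the paper's: reduce both one-step maps to the identity $v_i^\xi = \beta_i^\xi u_i^\xi$, derive the stationarity system $\sum_\eta c_{\xi\eta} u_i^\eta \beta_i^\eta = u_i^\xi$ from $\partial Q/\partial \beta_i^\xi = 0$ after multiplying by $u_i^\xi$, and invert $C$; the converse direction is likewise handled by running the computation backwards. The one place you go beyond the paper is in explicitly justifying why an interior point of the box-constrained problem is the global minimizer of the unconstrained quadratic --- via positive semidefiniteness of the Gram matrix (the valid-kernel hypothesis), its preservation under the congruence $D_i C D_i$, and strict convexity from invertibility of $C$ --- a step the paper's proof asserts without argument, so your version is, if anything, more complete.
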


\begin{proof}
First of all, note that $\kappa(\vetu^\xi,\vetu^\eta) = c_{\xi\eta}$ given by \eqref{eq:Cmatrix}. 

Let us first show that the RKAM coincide with the bipolar RPNN if the inequalities $0 <  \beta_i^\xi <C$ hold true for all $i$ and $\xi$.
If there is no active constrain at the solution of \eqref{eq:QPwBC}, then the Lagrange multiplier $\betav_i$ are also the solution of the unconstrained quadratic problem \eqref{eq:QPwBC}:
\bb \label{eq:Qalpha} \mbox{minimize} \; Q( \betav_i) = \frac{1}{2}\sum_{\xi,\eta = 1}^p   \beta_i^\xi u_i^\xi c_{\xi\eta} u_i^\eta  \beta_i^\eta  - \sum_{\xi=1}^p  \beta_{i}^{\xi}, \quad \forall i=1,\ldots,n.\ee
It turns out that the minimum of \eqref{eq:Qalpha}, obtained by imposing $\frac{\partial Q}{\partial \beta_i^\xi} = 0$, is the solution of the linear system 
\bb \label{eq:Qalpha2} \sum_{\eta = 1}^p u_i^\xi c_{\xi \eta} u_i^\eta  \beta_i^\eta  = 1, \quad \xi=1,\ldots,p.\ee
Multiplying \eqref{eq:Qalpha2} by $u_i^\xi$ and recalling that $(u_i^\xi)^2=1$, we obtain
\bb \label{eq:Qalpha4} \sum_{\eta = 1}^p c_{\xi \eta} u_i^\eta  \beta_i^\eta  = u_i^\xi, \quad \xi=1,\ldots,p.\ee
Since the matrix $C$ is invertible, the solution of the linear system of equations \eqref{eq:Qalpha4} is 
\bb \label{eq:Qalpha3} u_i^\xi  \beta_i^\xi  = \sum_{\eta = 1}^p c_{\xi \eta}^{-1} u_i^\eta, \quad \forall \xi=1,\ldots,p \qeq i=1,\ldots,n,\ee
where $c_{\xi \eta}^{-1}$ denotes the $(\xi,\eta)$-entry of $C^{-1}$. We conclude the first part of the proof by noting that the right-hand side of equations \eqref{eq:Qalpha3} and \eqref{eq:vs} coincide. Therefore, $v_i^\xi =  \beta_i^\xi u_i^\xi$ for all $i=1,\ldots,n$ and $\xi=1,\ldots,p$ and the RKAM coincides with the bipolar QRPNN.

On the other hand, if $0< v_i^\xi u_i^\xi < \rho$ for all $i=1,\ldots,n$ and $\xi=1,\ldots,p$, then $v_i^\xi =  \beta_i^\xi u_i^\xi$ is a solution of \eqref{eq:Qalpha3}. Equivalently, $ \beta_i^\xi = u_i^\xi v_i^\xi$ is the solution of the unconstrained quadratic problem \eqref{eq:Qalpha} as well as the quadratic problem with bounded constraints \eqref{eq:QPwBC}. As a consequence, the RKAM defined by \eqref{eq:QPwBC} and \eqref{eq:RKAM} coincide with the RPNN defined by \eqref{eq:update}, \eqref{eq:weights}, and \eqref{eq:QRPNN}.
\end{proof}

We would like to point out that the condition $0<\beta_i^\xi<\rho$ often occurs in the saturation mode, that is, when the matrix $C$ exhibits a diagonal dominance. In particular, the matrix $C$ is diagonally dominant given a sufficiently large parameter $\lambda$ of an excitation function $f \in \mathcal{F}$. In fact, given a parametric function $f \in \mathcal{F}$, the Lagrange multiplier $\beta_i^\xi$ approaches $1/f(1;\lambda)$ as $\lambda$ increases \cite{perfetti08}. Concluding, in the saturation mode, the RKAM, RCNN, and RPNN are all equivalent. We shall confirm this remark in the computational experiments presented in the next section.   

\section{Computational Experiments} \label{sec:ComputationalExperiments}

This section provides computational experiments comparing the performance of QHNNs, QRCNNs and the new QRPNN models as associative memory models. Although the paper address quaternionic models, let us begin by addressing the noise tolerance and storage capacity of the recurrent neural networks for the storage and recall of bipolar real-valued vectors. The bipolar case are also used to confirm the results presented in Section \ref{sec:RKAM}. We address the noise tolerance and storage capacity of quaternion-valued vectors subsequently. We would like to point out that the source-codes of the computational experiments, implemented in \texttt{Julia Language}, are available \url{https://github.com/mevalle/Quaternion-valued-Recurrent-Projection-Neural-Networks}. 

\subsection{Bipolar Associative Memories}  \label{ssec:Bipolar}
Let us compare the storage capacity and noise tolerance of the Hopfield neural networks (HNNs), the original RCNNs, and the new RPNNs designed for the storage of $p=36$ randomly generated bipolar (real-valued) vectors of length $n=100$. Precisely, we consider the correlation-based and projection-based Hopfield neural networks, the identity, high-order, potential-function, and exponential RCNN and RPNN models with parameters $q=5$, $L=3$, and $\alpha=4$, respectively. 

To evaluate the storage capacity and noise tolerance of the bipolar associative memories, the following steps have been performed 100 times for $n=100$ and $p=36$: 
\begin{enumerate}
\item We synthesized associative memories designed for the storage and recall of a randomly generated fundamental memory set $\mathcal{U}=\{\vetu^1,\ldots,\vetu^p\} \subset \{-1,+1\}^n$, where $\mbox{Pr}[u_i^\xi = 1] = \mbox{Pr}[u_i^\xi = -1]  = 0.5$  for all $i=1,\ldots,n$ and $\xi=1,\ldots,p$. 
\item We probed the associative memories with an input vector $\vetx(0)=[x_1(0),\ldots,x_n(0)]^T$ obtained by reversing some components of $\vetu^1$ with probability $\pi$, i.e., $\mbox{Pr}[x_i(0) = -u_i^1] = \pi$ and $\mbox{Pr}[x_i(0) = u_i^1] = 1-\pi$, for all $i$ and $\xi$. 
\item The associative memories have been iterated until they reached a stationary state or completed a maximum of 1000 iterations. A memory model succeeded to recall a stored item if the output equals $\vetu^1$. 

\end{enumerate}
\begin{figure}[t]
    \centering
    \includegraphics[width=1\columnwidth]{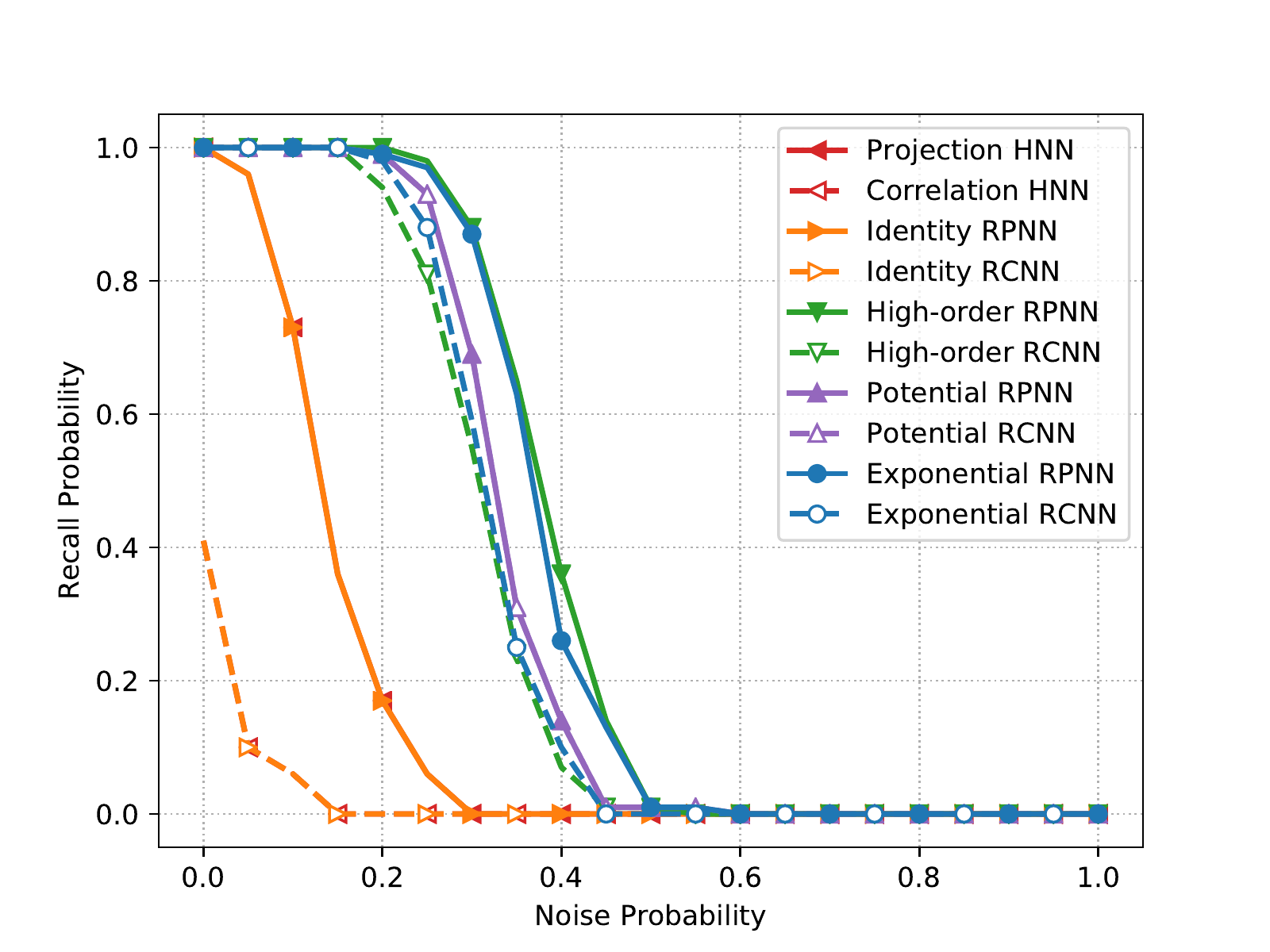}
    \caption{Recall probability of bipolar associative memories by the noise intensity introduced in the input vector.}
    \label{fig:Bipolar}
\end{figure}
Figure \ref{fig:Bipolar} shows the probability of an associative memory to recall a fundamental memory by the probability of noise introduced in the initial state. Note that the projection-based HNN coincides with the identity RPNN. Similarly, the correlation-based HNN coincides with the identity RCNN. Also, note that the RPNNs always succeeded to recall undistorted fundamental memories (zero noise probability). The high-order, potential-function, and exponential RCNNs also succeeded to recall undistorted fundamental memories. Nevertheless, the recall probability of the high-order and exponential RPNNs are greater than or equal to the recall probability of the corresponding RCNNs. In other words, the RPNNs exhibit better noise tolerance than the corresponding RCNNs. The potential-function RCNN and RPNN yielded similar recall probabilities.

\begin{figure}[t]
    \centering
    \includegraphics[width=1\columnwidth]{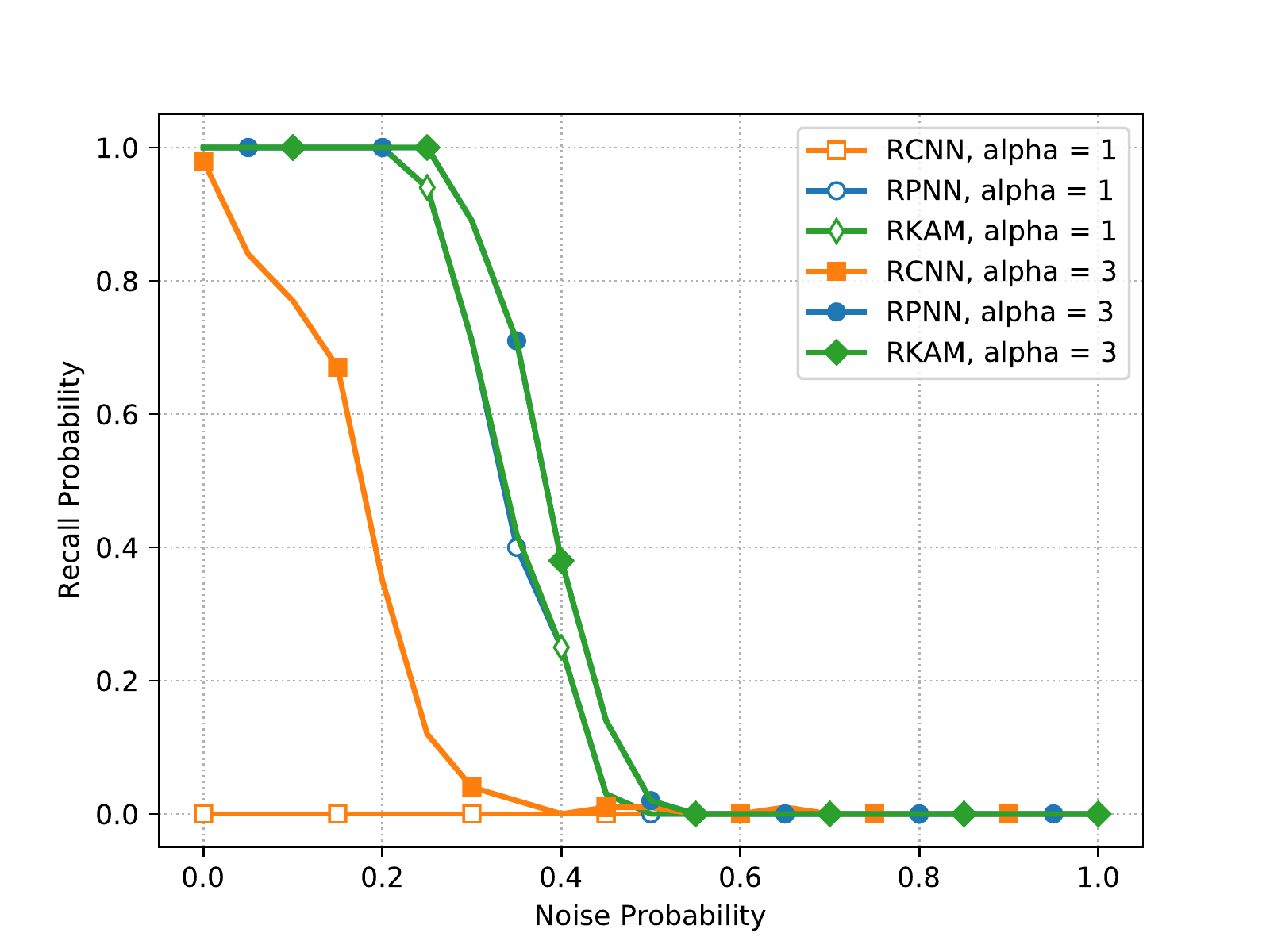}
    \caption{Recall probability of exponential bipolar RPNN and RKAM by the noise intensity introduced in the input vector.}
    \label{fig:RKAM_Exponential}
\end{figure}
In a similar fashion, let us compare the storage capacity and noise tolerance of the RCNN, RPNN and the RKAM with kernel given by \eqref{eq:f2kernel} with $f\equiv f_e$. In this experiment, we considered $\rho = 1000$ and $\alpha=1$ and $\alpha=3$. Figure \ref{fig:RKAM_Exponential} shows the recall probabilities of the exponential bipolar RCNN and RKAM, with different parameter values, by the noise probability introduced in the input. Note that both RPNN and RKAM outperformed the RCNN. Furthermore, the exponential bipolar RPNN and the RKAM coincided for all the values of the parameter $\alpha \in \{1,3\}$. According to Theorem \ref{thm:RKAMxRPNN}, an RKAM coincide with a bipolar RPNN if the Lagrange multipliers satisfy $0<\beta_i^\xi<\rho$ for all $i=1,\ldots,n$ and $\xi=1,\ldots,p$. Figure \ref{fig:RKAM_ExpHist} shows the histogram of the Lagrange multipliers obtained solving \eqref{eq:QPwBC} for a random generated matrix $U$ and the exponential kernel with $\alpha=1$, 2, and 3. The vertical dotted lines correspond to the values $e^{-3}$, $e^{-2}$, and $e^{-1}$. Note that $\beta_i^\xi$ approaches $1/f(1)=e^{-\alpha}$ as $\alpha$ increases.   More importantly, the inequalities $0<\beta_i^\xi<\rho$ are satisfied for $\alpha>1$ and $\rho \geq 2$.
\begin{figure}[t]
    \centering
    \includegraphics[width=1\columnwidth]{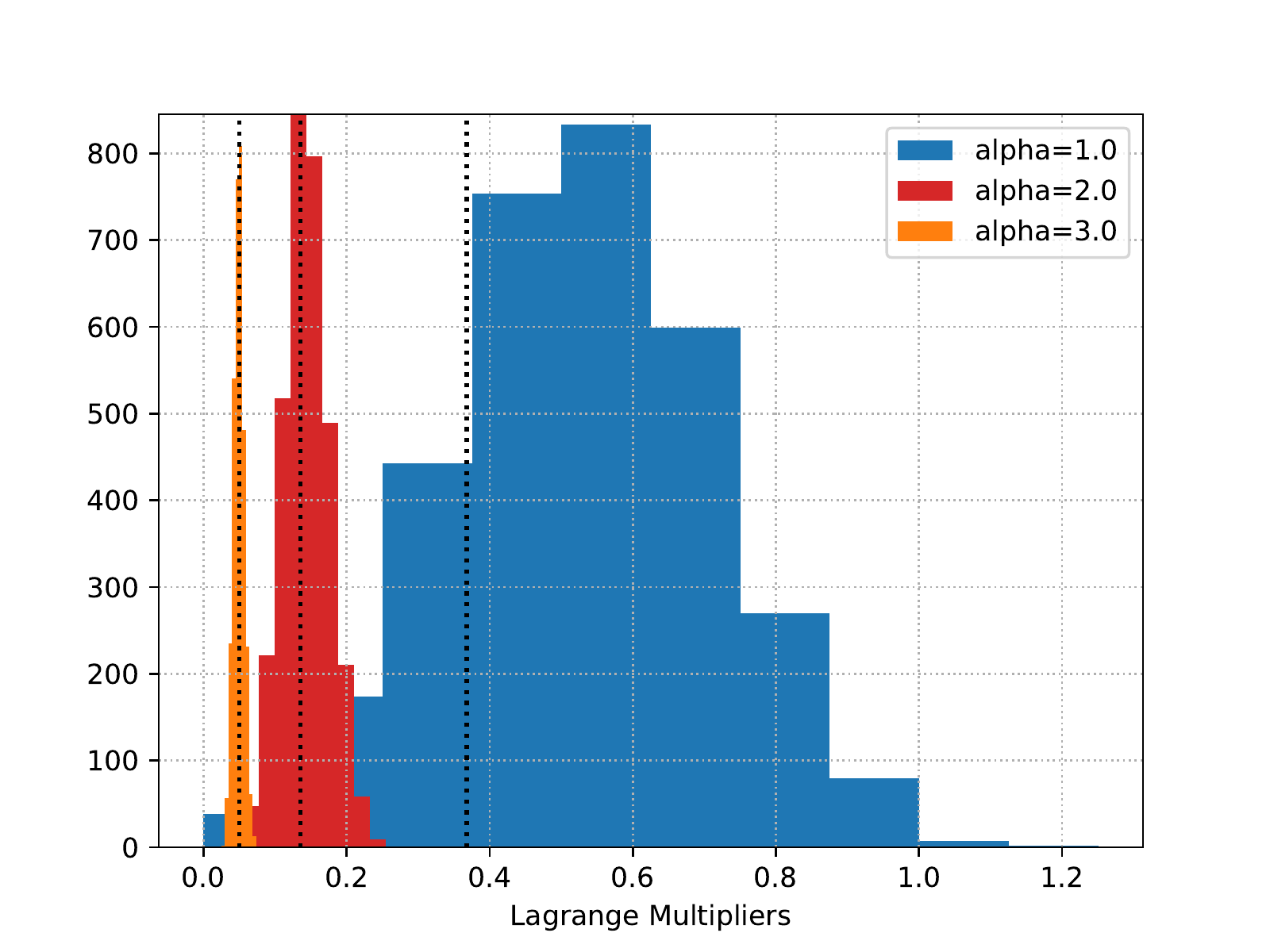}
    \caption{Histogram of the Lagrange multipliers of a RKAM model.}
    \label{fig:RKAM_ExpHist}
\end{figure}

\subsection{Quaternion-valued Associative Memories} \label{ex:Quaternion}

Let us now investigate the storage capacity and noise tolerance of the asssociative memory models for the storage and recall of $p=36$ randomly generated quaternion-valued vectors of length $n=100$. In this example, we considered the projection-based and the correlation-based quaternion-valued Hopfield neural network (QHNNs) as well as the identity, high-order, potential-function, and exponential QRCNNs and QRPNNs with parameters $q=20$, $L=3$, and $\alpha=15$. These parameters have been determined so that the QRCNNs have more than 50\% probability to recall an undistorted fundamental memory.
In analogy to the previous example, the following steps have been performed 100 times: 
\begin{enumerate}
\item We synthesized associative memories designed for the storage and recall of uniformly distributed fundamental memories $\mathcal{U}=\{\vetu^1,\ldots,\vetu^p\}$. Formally, we defined $u_i^\xi = \mathtt{RandQ}$ for all $i=1,\ldots,n$ and $\xi=1,\ldots,p$ where
\[ \mathtt{RandQ} = (\cos \phi+\ii\sin\phi)(\cos\psi+ \kk\sin \psi)(\cos \theta+\jj \sin \theta), \]
is a randomly generated unit quaternion obtained by sampling angles $\phi \in [-\pi,\pi)$, $\psi \in [-\pi/4,\pi/4]$, and $\theta \in [-\pi/2,\pi/2)$ using an uniform distribution. 
\item We probed the associative memories with an input vector $\vetx(0)=[x_1(0),\ldots,x_n(0)]^T$ obtained by replacing some components of $\vetu^1$ with probability $\pi$ by an uniformly distributed component, i.e., $\mbox{Pr}[x_i(0) = \mathtt{RandQ}] = \pi$ and $\mbox{Pr}[x_i(0) = u_i^1] = 1-\pi$, for all $i$ and $\xi$.
\item The associative memories have been iterated until they reached a stationary state or completed a maximum of 1000 iterations. The memory model succeeded if the output equals the fundamental memory $\vetu^1$.
\end{enumerate} 

\begin{figure}[t]
    \centering
    \includegraphics[width=1\columnwidth]{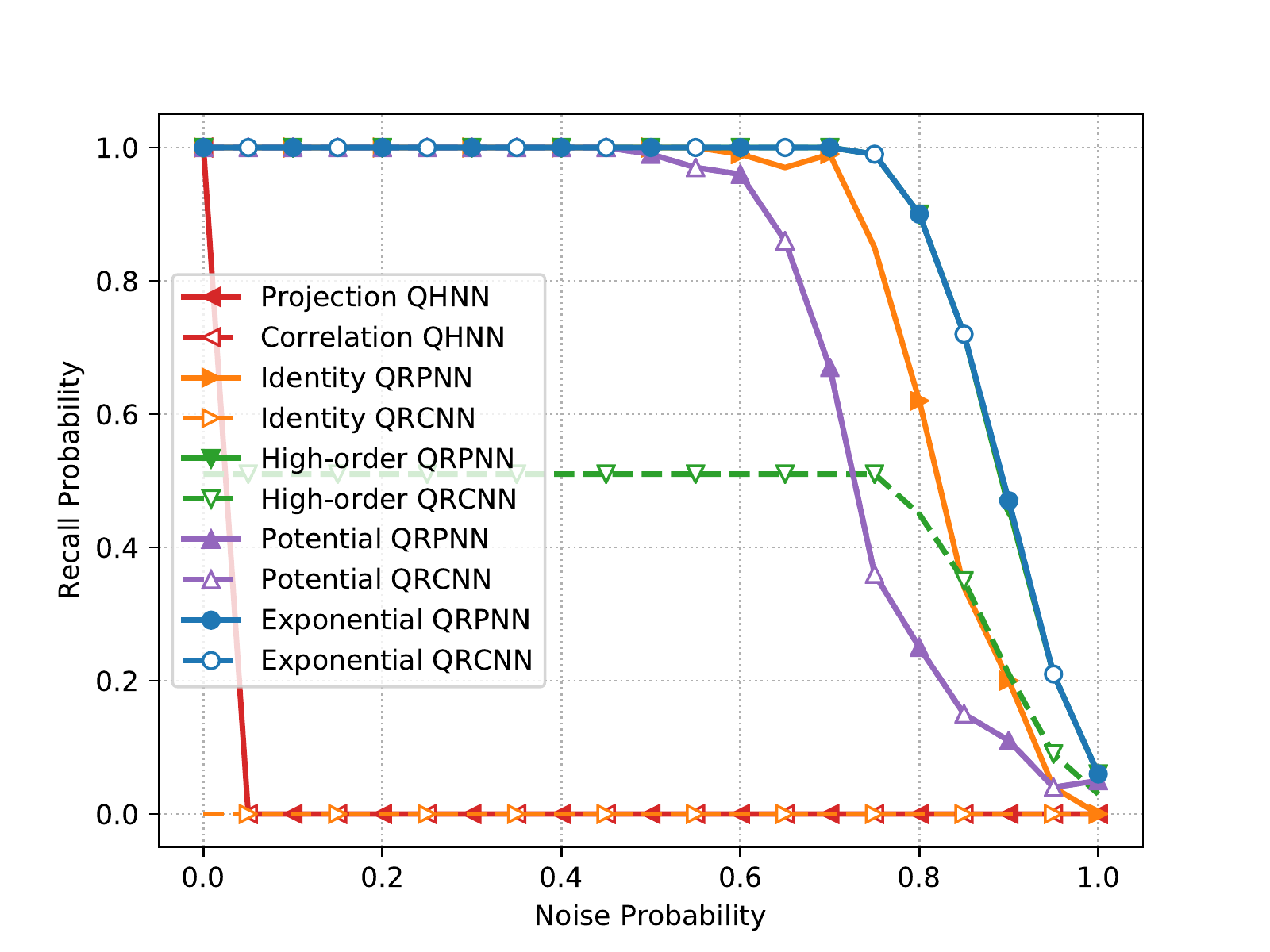}
    \caption{Recall probability of quaternion-valued associative memories by the noise intensity introduced in the input vector.}
    \label{fig:Quaternion}
\end{figure}
Figure \ref{fig:Quaternion} shows the probability of a quaternion-valued associative memory to recall a fundamental memory by the probability of noise introduced in the initial state. 
As expected, the QRPNNs always succeeded to recall undistorted fundamental memories. The potential-function and exponential QRCNNs also succeeded to recall undistorted fundamental memories. Indeed, the potential-function QRCNN and QRPNNs yielded the same recall probability. The noise tolerance of the exponential QRCNN and QRPNN also coincided. Nevertheless, the recall probability of the high-order QRPNN is greater than the recall probability of the corresponding QRCNNs. Furthermore, in contrast to the real-valued case, the projection QHNN differs from the identity QRPNN. In fact, the noise tolerance of the identity QRPNN  is far greater than the noise tolerance of the projection QHNN.

\subsection{Storage and Recall of Color Images}

In the previous subsection, we compared the performance of QHNN, QRCNN, and QRPNN models designed for the storage and recall of uniformly distributed fundamental memories. Let us now compare the performance of the quaternion-valued associative memories for the storage and recall of color images. Specifically, let us compare the noise tolerance of QHNN, QRCNN, and QRPNN models when the input is an color image corrupted by Gaussian noise. Recall that Gaussian noise are introduced in a color image, for example, due to faulty sensors \cite{plataniotis99}. Computationally, an image corrupted by Gaussian noise is obtained by adding a term drawn from an Gaussian distribution with zero mean and a fixed standard variation to each channel of a color image. 

At this point, we would like to recall that an RGB color image $\imI$ can be converted to a unit quaternion-valued vector $\vetx=[x_1,\ldots,x_n] \in \mathbb{S}^n$, $n=1024$, as follows \cite{castro17bracis}: Let $I_i^R \in [0,1]$, $I_i^G \in [0,1]$, and $I_i^B \in [0,1]$ denote respectively the red, green, and blue intensities at the $i$th pixel of an RGB color image $\imI$. For $i=1,\ldots,n$, we first compute the phase-angles 
\begin{align}
\label{eq:phi}    \phi_i &= \left(-\pi +\epsilon \right) + 2\left(\pi - \epsilon \right) I_i^R, \\
\label{eq:psi}    \psi_i &= \left(-\frac{\pi}{4} + \epsilon \right) + \left( \frac{\pi}{2}-2\epsilon \right) I_i^G, \\
\label{eq:theta}    \theta_i &= \left(-\frac{\pi}{2} + \epsilon \right) + \left( \pi-2\epsilon \right) I_i^G,
\end{align}
where $\epsilon>0$ is a small number such that $\phi_i \in [-\pi,\pi)$, $\psi_i \in [-\pi/4,\pi/4]$, and $\theta_i \in [-\pi/2,\pi/2)$  \cite{buelow99}. In our computational experiments, we adopted $\epsilon = 10^{-4}$. Then, we define the unit quaternion-valued vector $\vetx$ using the phase-angle representation $x_i = e^{\phi_i \ii} e^{\psi_i \kk} e^{\theta_i \jj}$ of its components. Equivalently, we have
\bb x_i = (\cos\phi_i \ii \sin\phi_i)(\cos \psi_i + \kk \sin \psi_i)(\cos\theta_i+\jj\sin\theta_i), \quad \forall i=1,\ldots,n.\ee 
Conversely, given an unit quaternion-valued vector $\vetx$, we first compute the phase-angles $\phi_i$, $\psi_i$, and $\theta_i$ of the component $x_i$ using Table 2.2 in \cite{buelow99}. Afterwards, we obtain the RGB color image $\imI$ by inverting \eqref{eq:phi}-\eqref{eq:theta}, that is,
\begin{align}
    I_i^R = \frac{\phi_i+\pi-\epsilon}{2*(\pi-\epsilon)}, \quad
    I_i^G = \frac{\psi_i+\pi/4-\epsilon}{(\pi/2-2\epsilon)}, \qeq
    I_i^B = \frac{\theta_i+\pi/2-\epsilon}{(\pi-2\epsilon)},
\end{align}
for all $i=1,\ldots,n$.

\begin{figure}[t]
    \begin{tabular}{cccc}
    {\footnotesize a) Original image} &  {\footnotesize b) Corrupted image} &
    \parbox{0.22\columnwidth}{\centering \footnotesize c) Correlation-based QHNN} & \parbox{0.22\columnwidth}{\centering \footnotesize d) Projection-based QHNN} \\
\includegraphics[width=0.22\columnwidth]{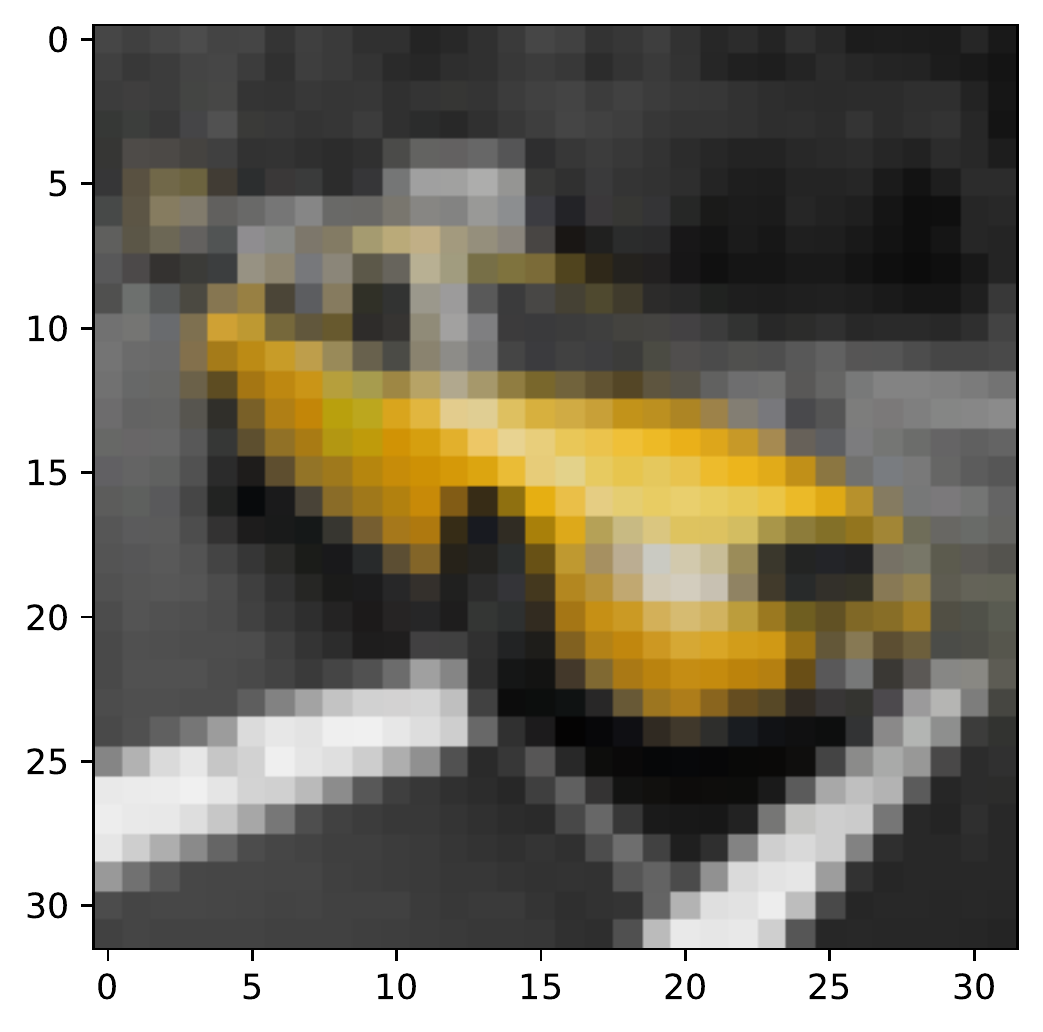} &
\includegraphics[width=0.22\columnwidth]{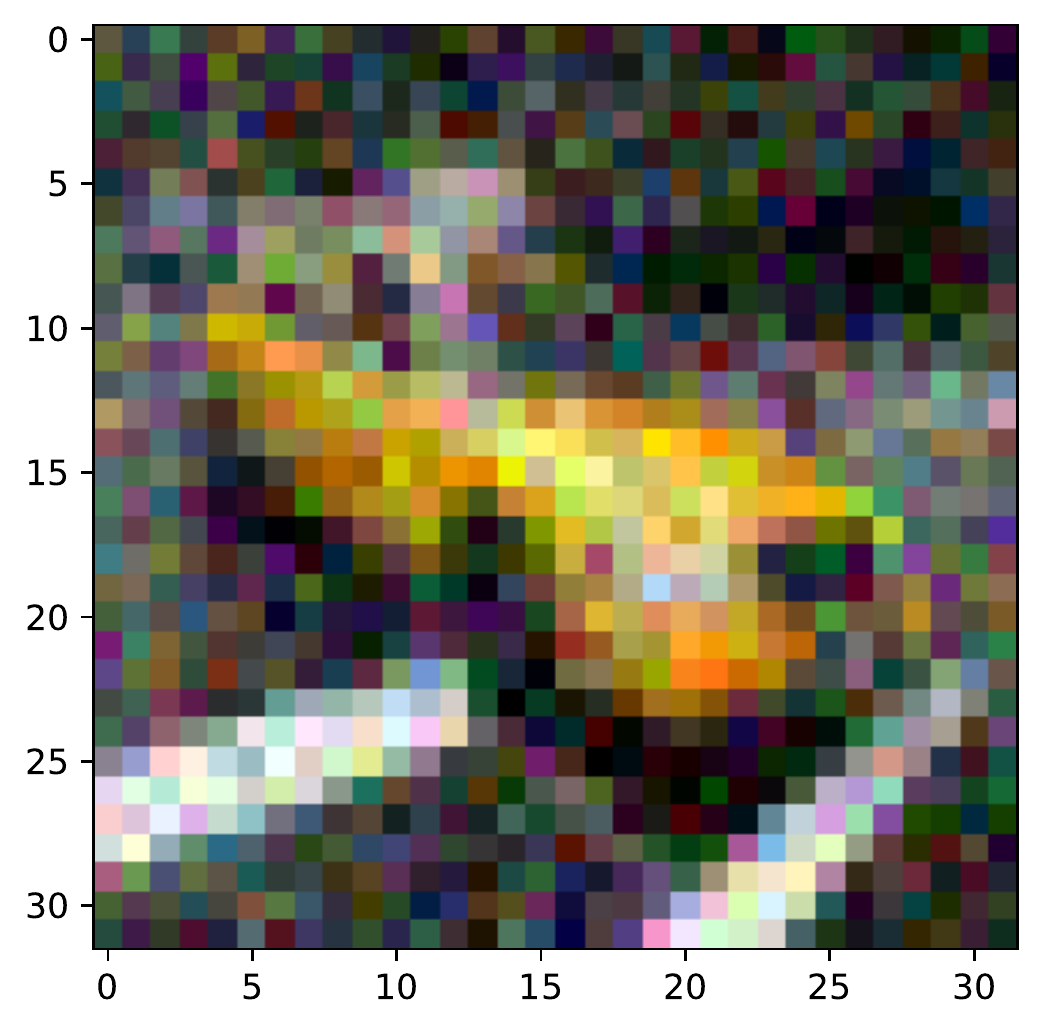} &
\includegraphics[width=0.22\columnwidth]{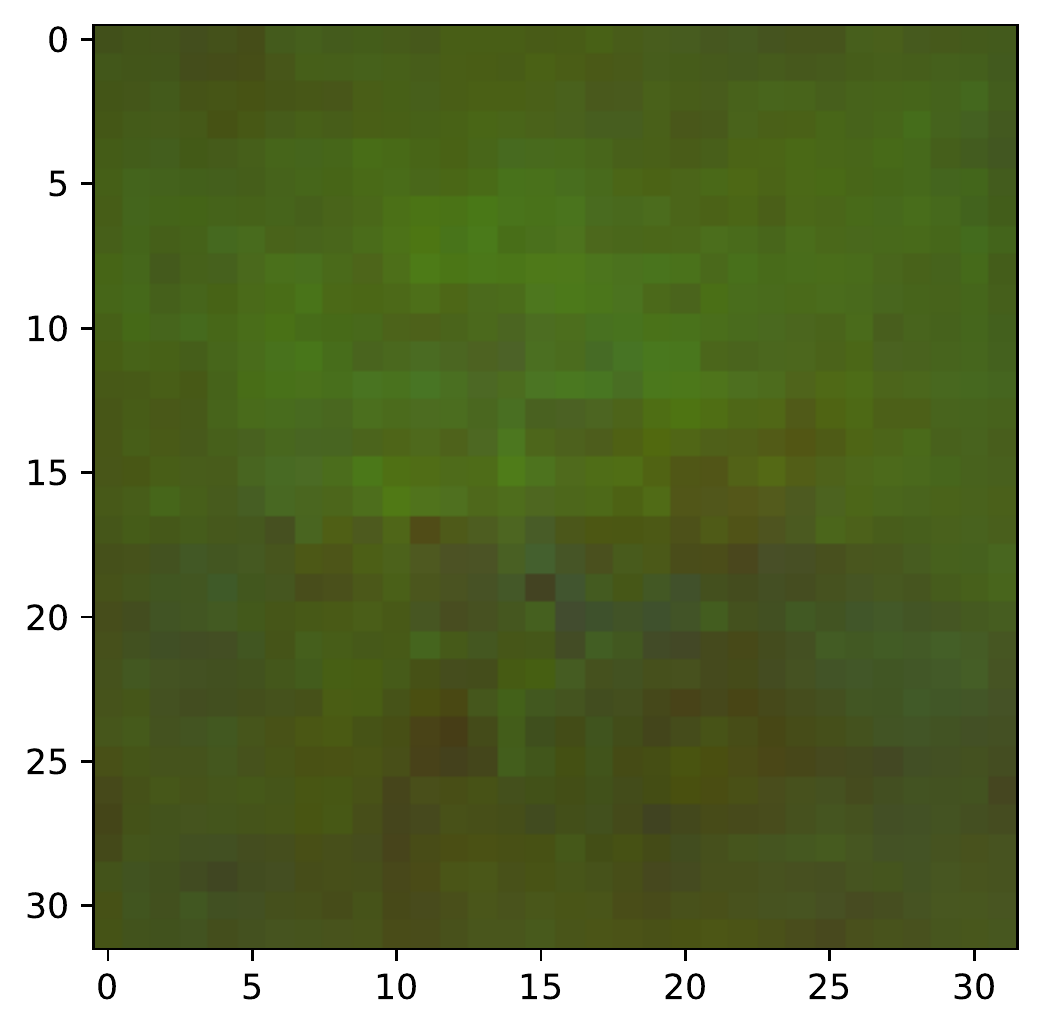} &
\includegraphics[width=0.22\columnwidth]{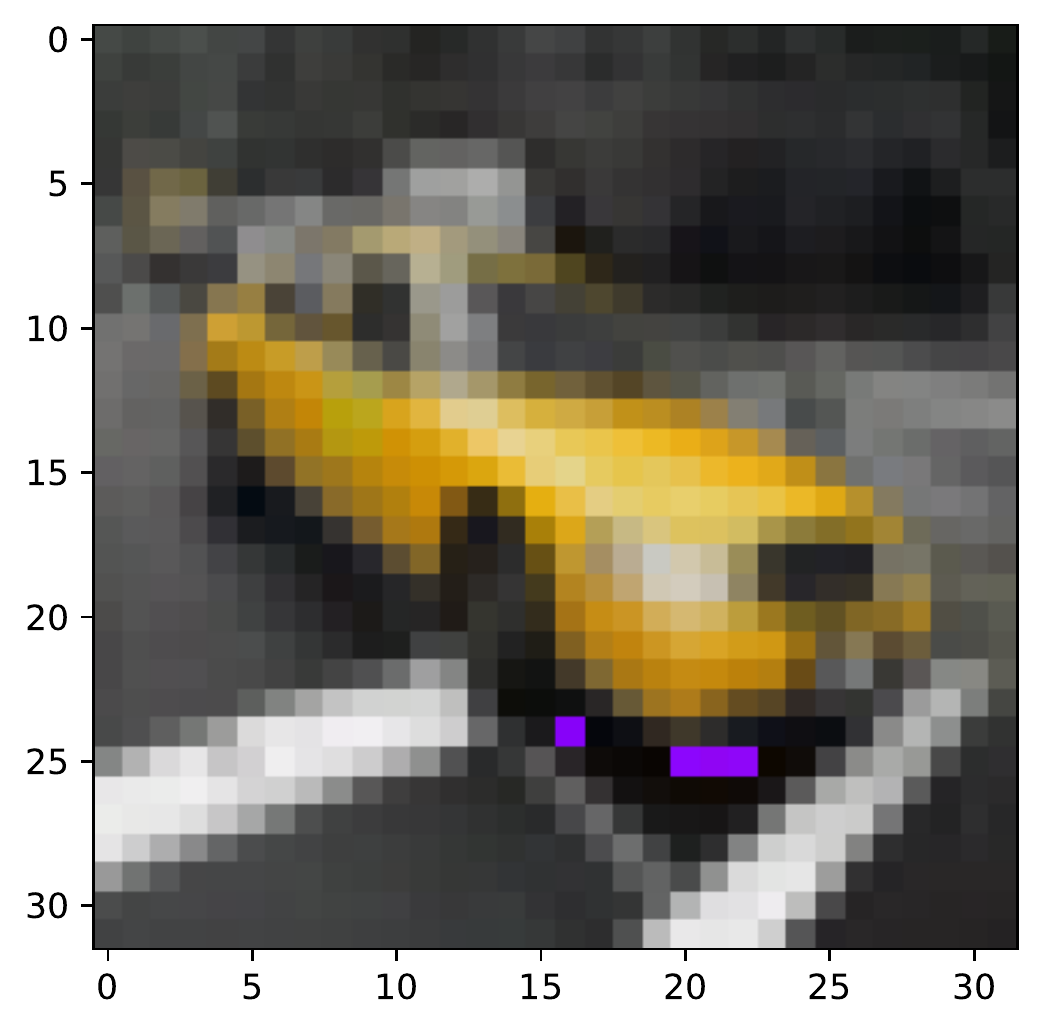} \\
{\footnotesize e) Identity QRCNN} &  
\parbox{0.22\columnwidth}{\centering \footnotesize f) High-order QRCNN ($q=70$)} &
    \parbox{0.22\columnwidth}{\centering \footnotesize g) Potential-function QRCNN ($L=5$)} &  \parbox{0.22\columnwidth}{\centering \footnotesize h) Exponential QRCNN ($\alpha=40$)}  \\
\includegraphics[width=0.22\columnwidth]{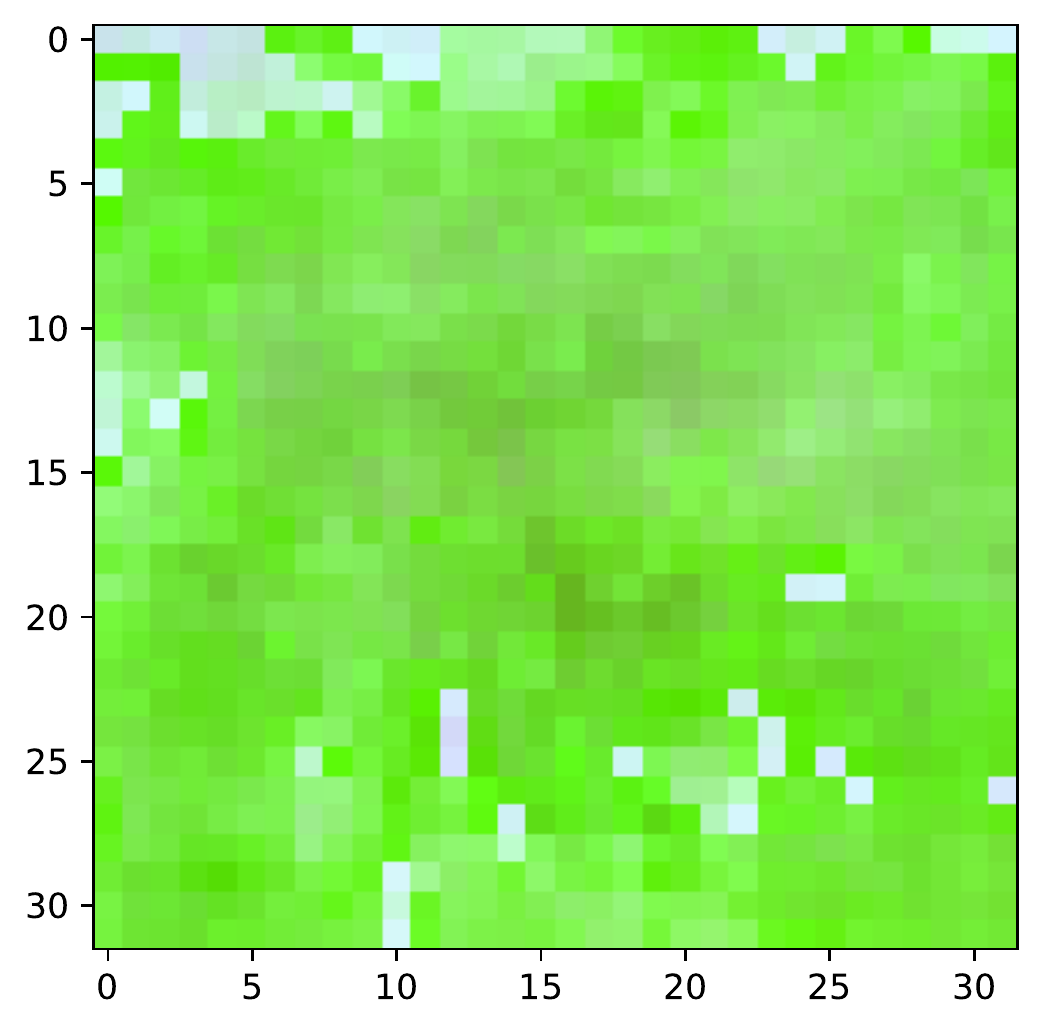} &
\includegraphics[width=0.22\columnwidth]{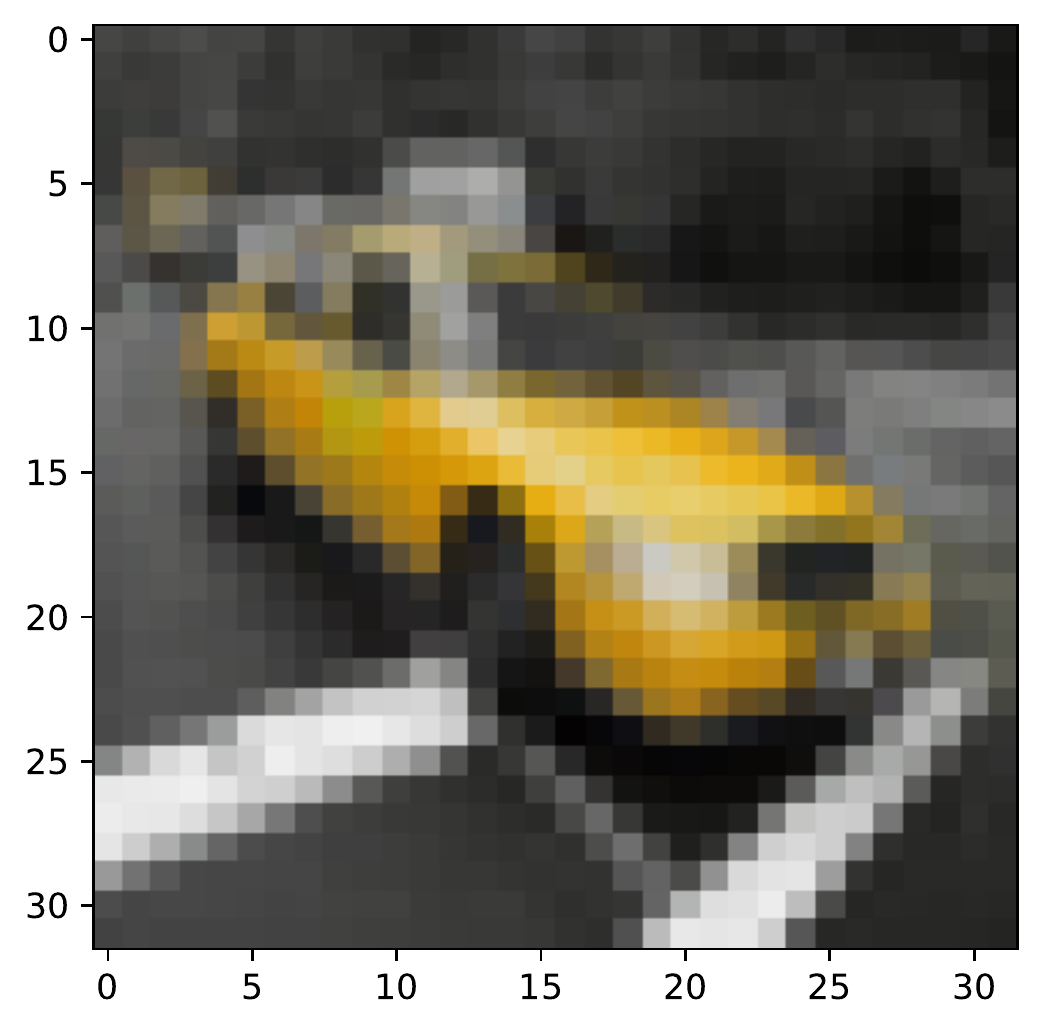} &
\includegraphics[width=0.22\columnwidth]{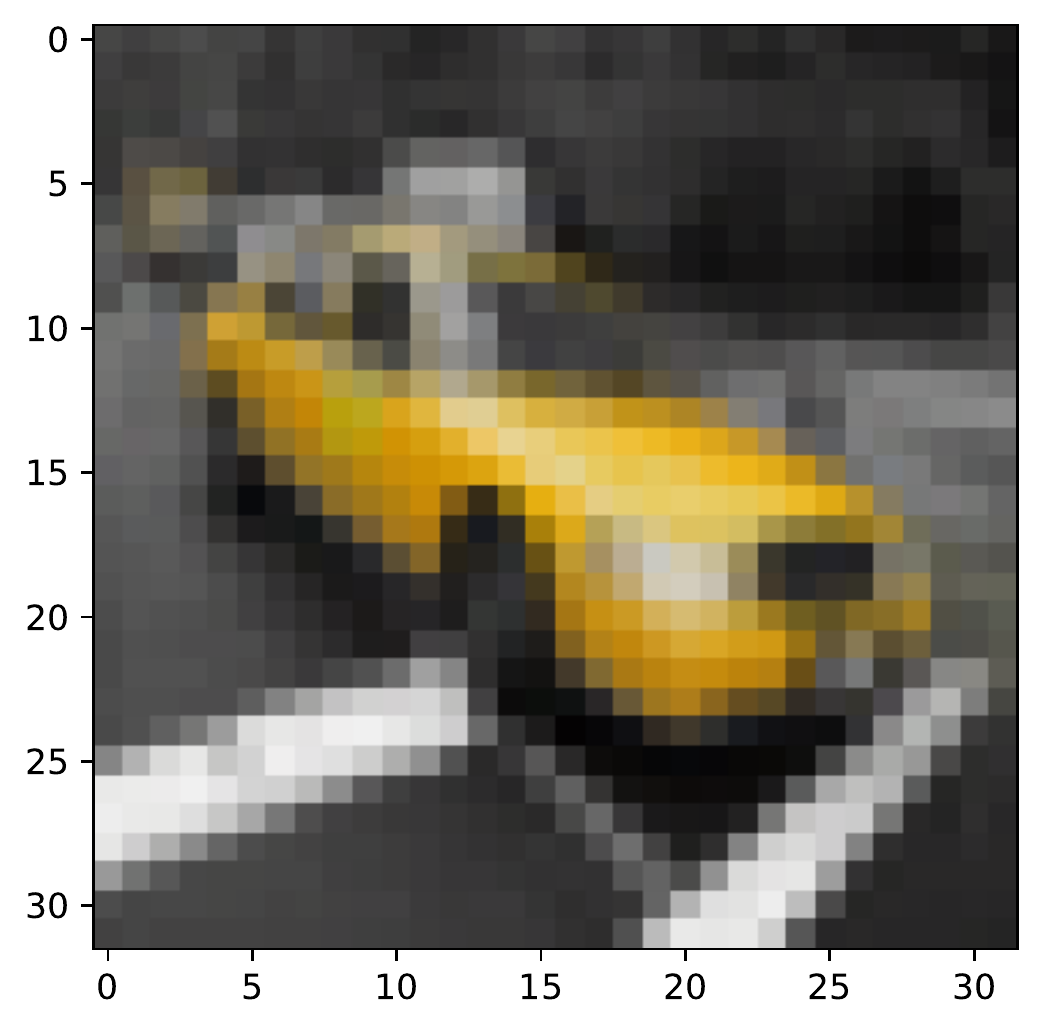} &
\includegraphics[width=0.22\columnwidth]{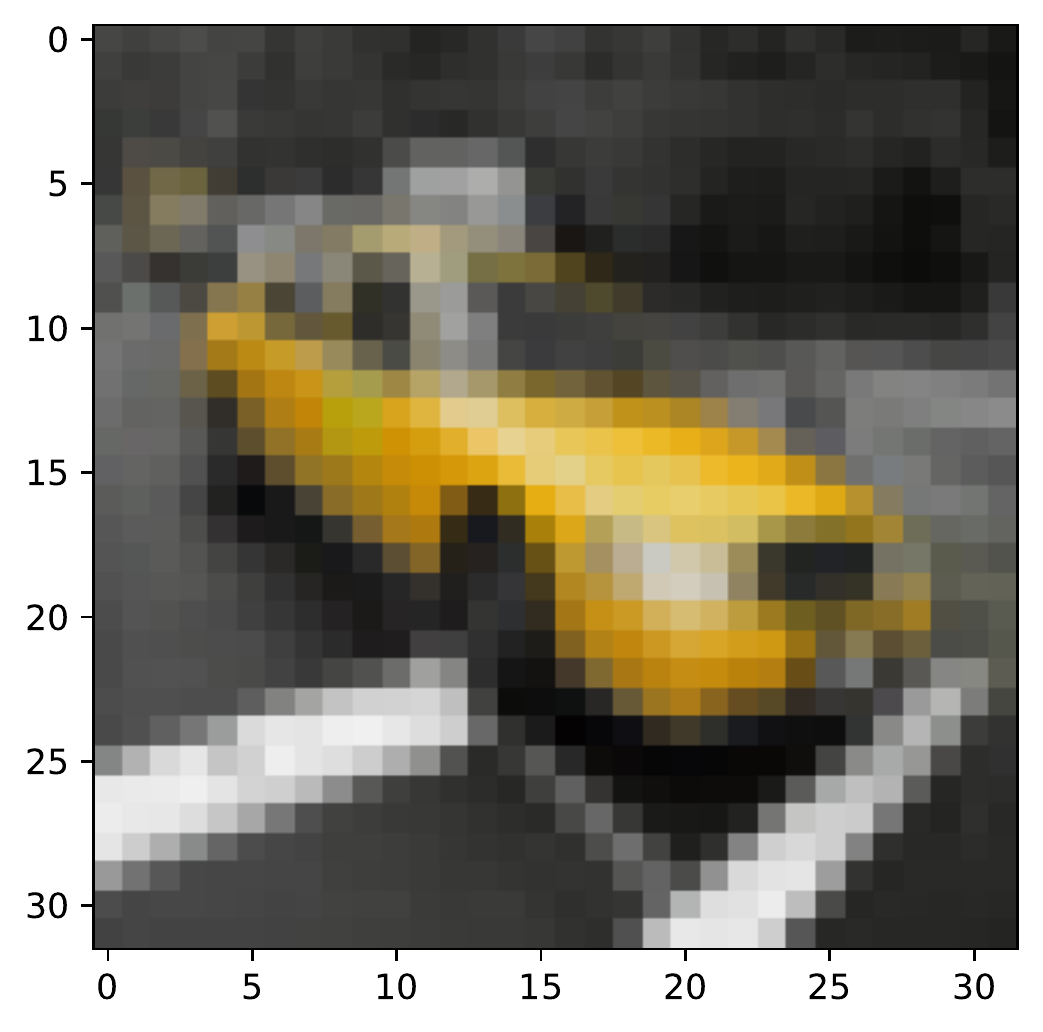} \\
{\footnotesize i) Identity QRPNN} &  
\parbox{0.22\columnwidth}{\centering \footnotesize j) High-order QRPNN ($q=70$)} &
    \parbox{0.22\columnwidth}{\centering \footnotesize k) Potential-function QRPNN ($L=5$)} &  \parbox{0.22\columnwidth}{\centering \footnotesize l) Exponential QRPNN ($\alpha=40$)}  \\
\includegraphics[width=0.22\columnwidth]{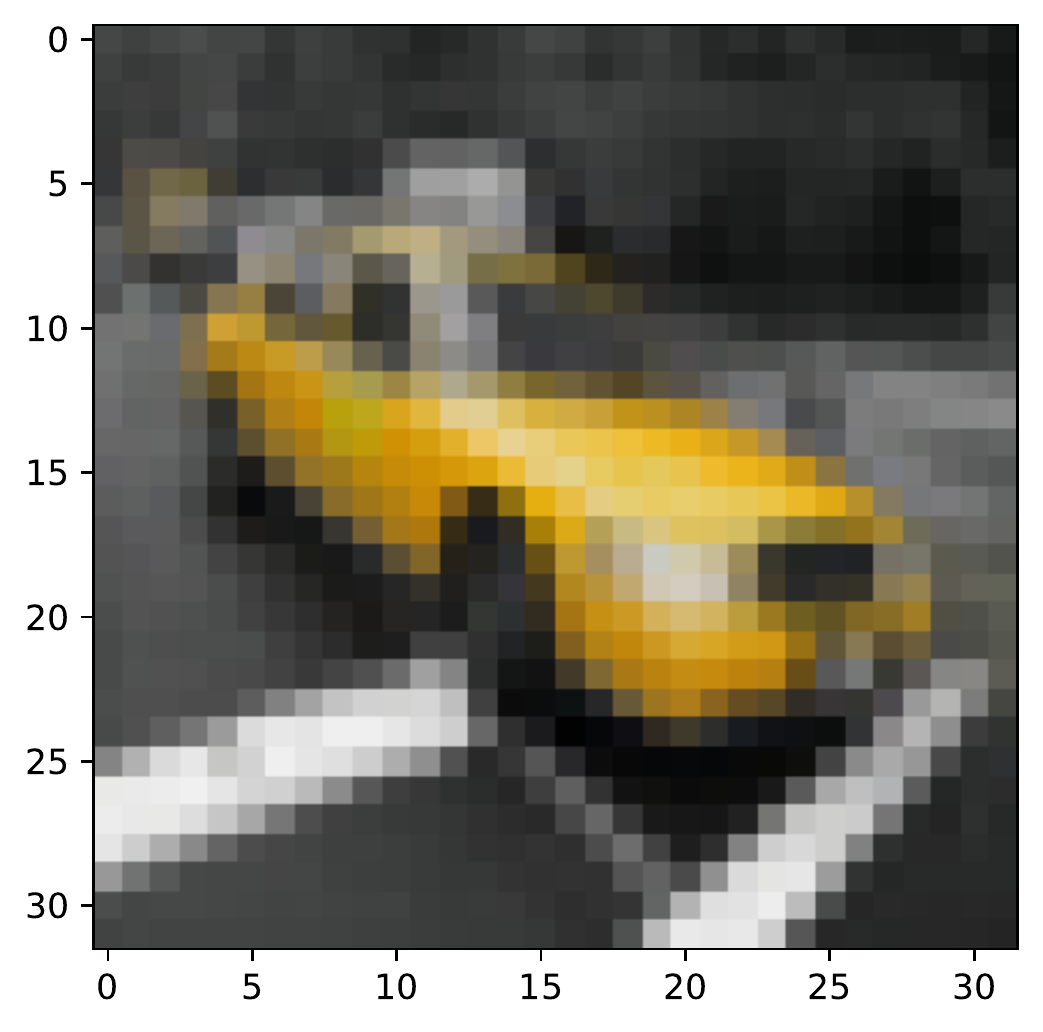} &
\includegraphics[width=0.22\columnwidth]{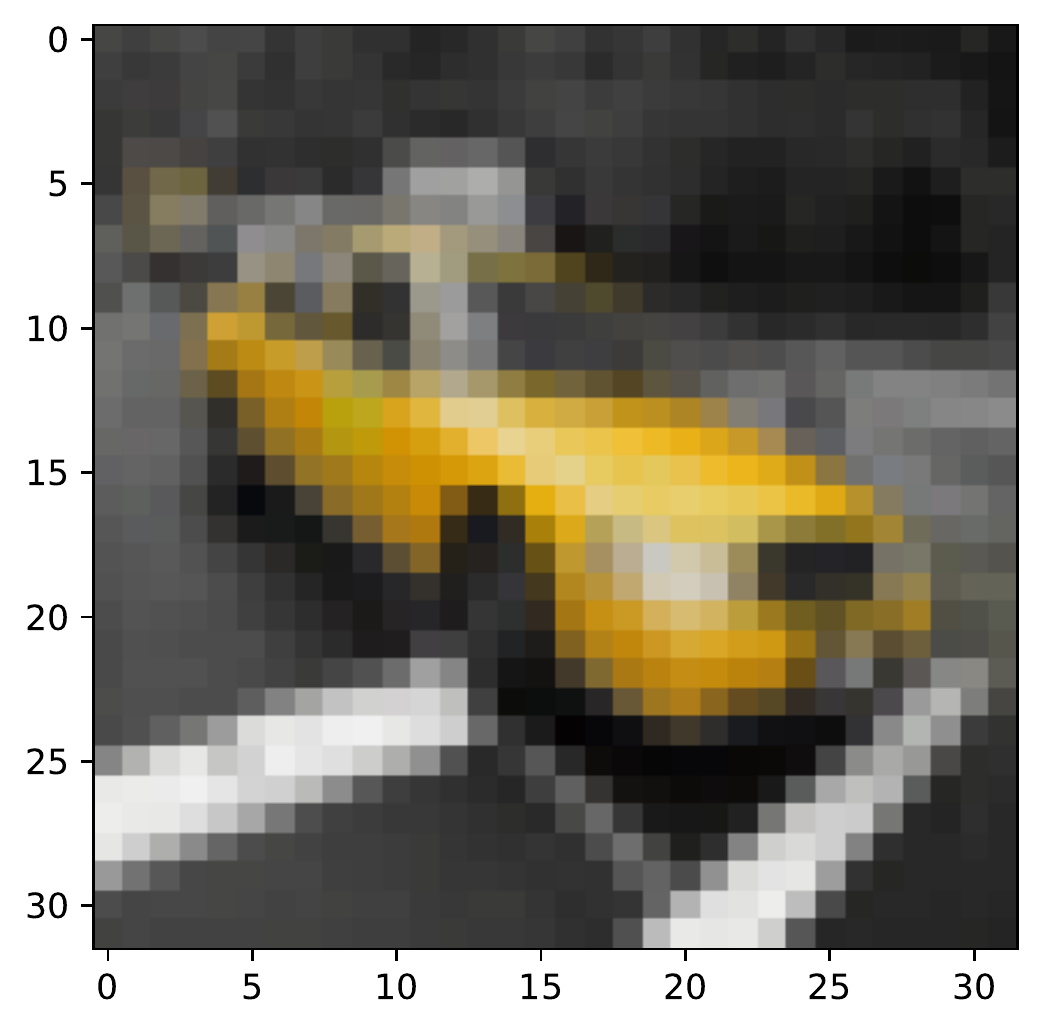} &
\includegraphics[width=0.22\columnwidth]{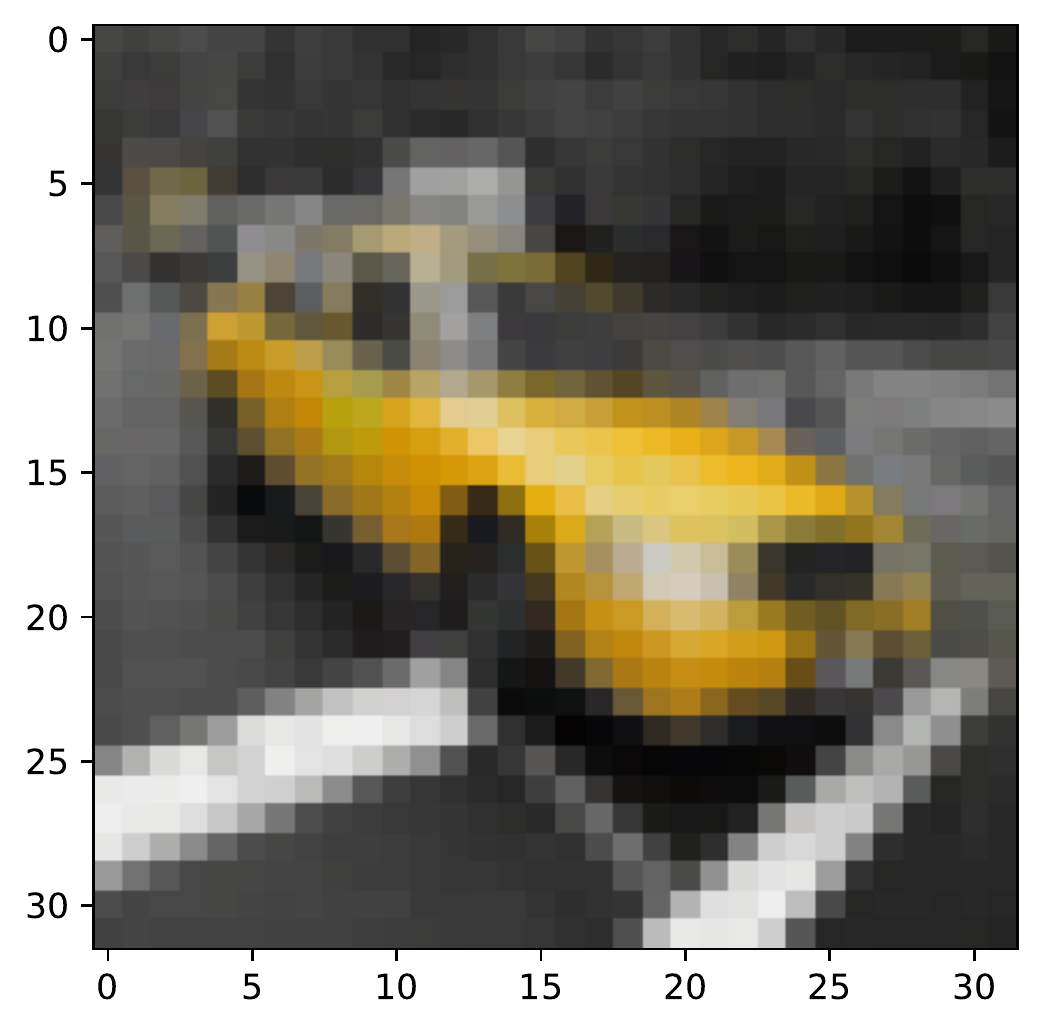} &
\includegraphics[width=0.22\columnwidth]{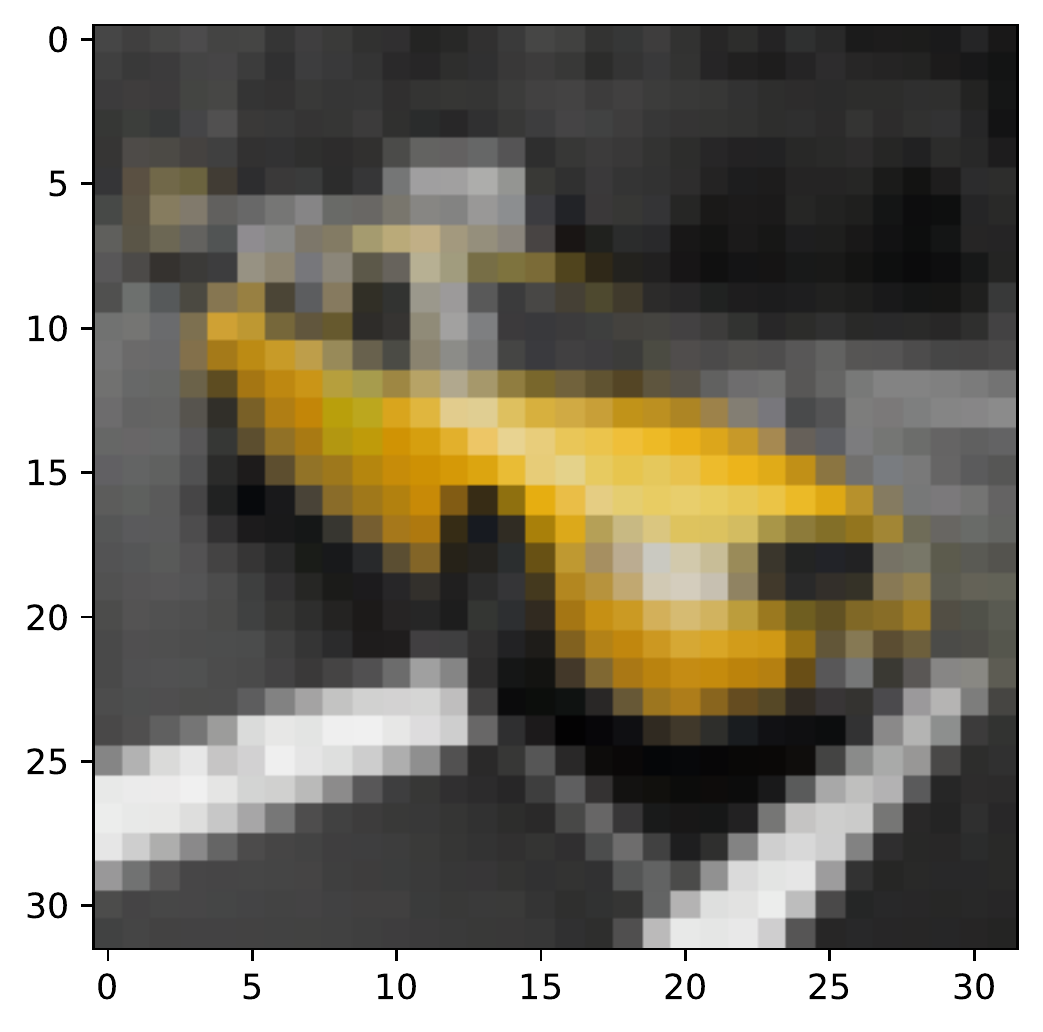}
    \end{tabular}
    \caption{Original color image, input image corrupted by Gaussian noise with standard deviation 0.1, and the corresponding images retrieved by the quaternion-valued associative memories.}
    \label{fig:images}
\end{figure}

In order to compare the performance of the quaternion-valued associative memories, we used color images from the CIFAR dataset \cite{cifar}. Recall that the CIFAR dataset contains 60000 RGB color images of size $32\times 32$. In this experiment, we randomly selected $p=200$ color images and converted them to unit quaternion-valued vectors $\vetu^1,\ldots,\vetu^p$. Similarly, we corrupted one of the $p$ selected images with Gaussian noise and converted it to a quaternion-valued vector $\vetx \in \mathbb{S}^n$. The corrupted vector $\vetx$ have been presented to associative memories designed for the storage of the fundamental memory set $\mathcal{U} = \{\vetu^1,\ldots,\vetu^p\} \in \mathbb{S}^n$. 

Figure \ref{fig:images} shows an original color image selected from the CIFAR dataset, a color image corrupted by Gaussian noise with standard deviation $0.1$, and the corresponding images retrieved by the associative memory models.
Note that the correlation-based QHNN as well as the identity QRCNN failed to retrieve the original image due to the cross-talk between the stored items. Although the projection-based QHNN yielded an image visually similar to the original cab's image, this memory model also failed to retrieve the original image due to the magenta pixels near the cab's bumpers. All the other associative memories succeed to retrieve the original image. Quantitatively, we say that an associative memory succeed to recall an stored image if the error given by the Euclidean norm $\|\vetu^1-\vety\|$, where $\vety$ denotes the retrieved quaternion-valued vector, is less than or equal to a tolerance $\tau = 10^{-4}$. Table \ref{tab:Errors} shows the error produced by the QHNN, QRCNN, and QRPNN memory models. This table also contains the error between the fundamental memory $\vetu^1$ and the quaternion-valued vector corresponding to the corrupted image. 
\begin{table}[t]
    \centering
    \begin{tabular}{||c|c||} \hline \hline
    Corrupted image:     & $20.8$  \\ \hline
    Correlation-based QHNN:     & $35.2$ \\
    Projection-based QHNN: & $1.9$ \\ \hline
    Identity QRCNN: & $41.2$ \\
    High-order QRCNN: & $8.3 \times 10^{-11}$\\
    Potential-function QRCNN: & $3.0 \times 10^{-15}$\\
    Exponential QRCNN: & $3.2 \times 10^{-10}$\\ \hline
    Identity QRPNN: & $9.0 \times 10^{-5}$ \\
    High-order QRPNN: & $3.3 \times 10^{-15}$\\
    Potential-function QRPNN: & $3.9 \times 10^{-15}$\\
    Exponential QRPNN: & $1.6 \times 10^{-15}$ \\ \hline \hline
    \end{tabular}
    \caption{Absolute error between the fundamental memory $\vetu^1$ and either the input or the quaternion-valued vector recalled by an associative memory model.}
    \label{tab:Errors}
\end{table}

\begin{figure}[t]
    \centering
    \includegraphics[width=1\columnwidth]{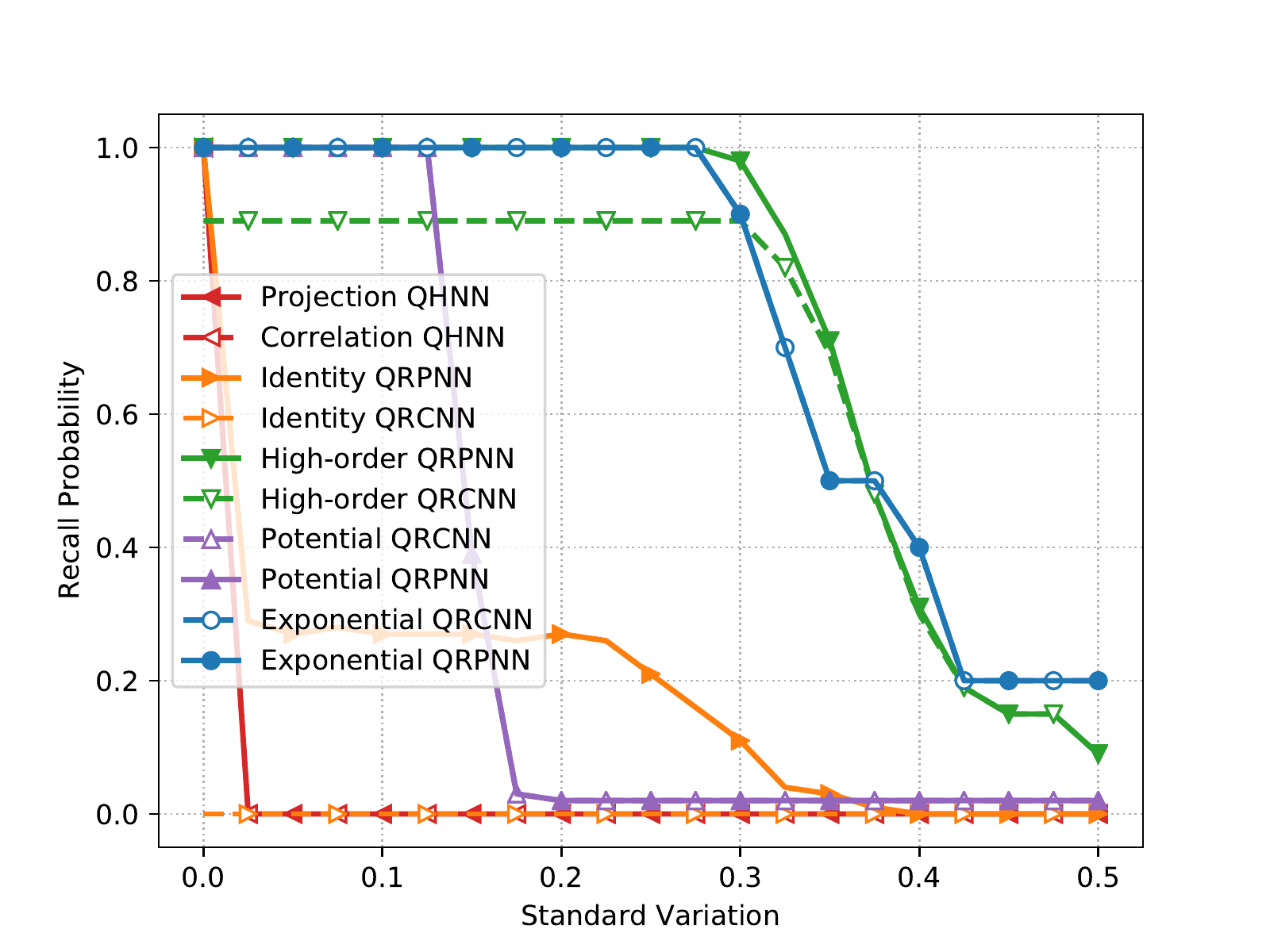}
    \caption{Recall probability of quaternion-valued associative memories by the standard deviation of the Gaussian noise introduced in the input.}
    \label{fig:ExpCIFAR}
\end{figure}
For a better comparison of the noise tolerance of the quaternion-valued associative memories, we repeated the preceding experiment 100 times. We also considered images corrputed by Gaussian noise with several different standard deviation values. Figure \ref{fig:ExpCIFAR} shows the probability of successful recall by the standard deviation of the Gaussian noise introduced in the input image. In aggreement with Theorem \ref{thm:fixed_points}, the QRPNN always succeeded to recall undistorted patterns (zero standard deviation). Note that the potential-function QRPNN and QRCNN coincided. From Theorem \ref{thm:RPNNxRCNN}, we conclude that these neural networks are in saturated mode. Furthermore, like the experiment described in the previous subsection,  the projection-based QHNN differs from the identity QRPNN. The latter, however, yielded larger recall probabilities because it circumvents the rotational invariance present in the QHNN model \cite{kobayashi16a}.

\begin{figure}[t]
    \centering
    \includegraphics[width=1\columnwidth]{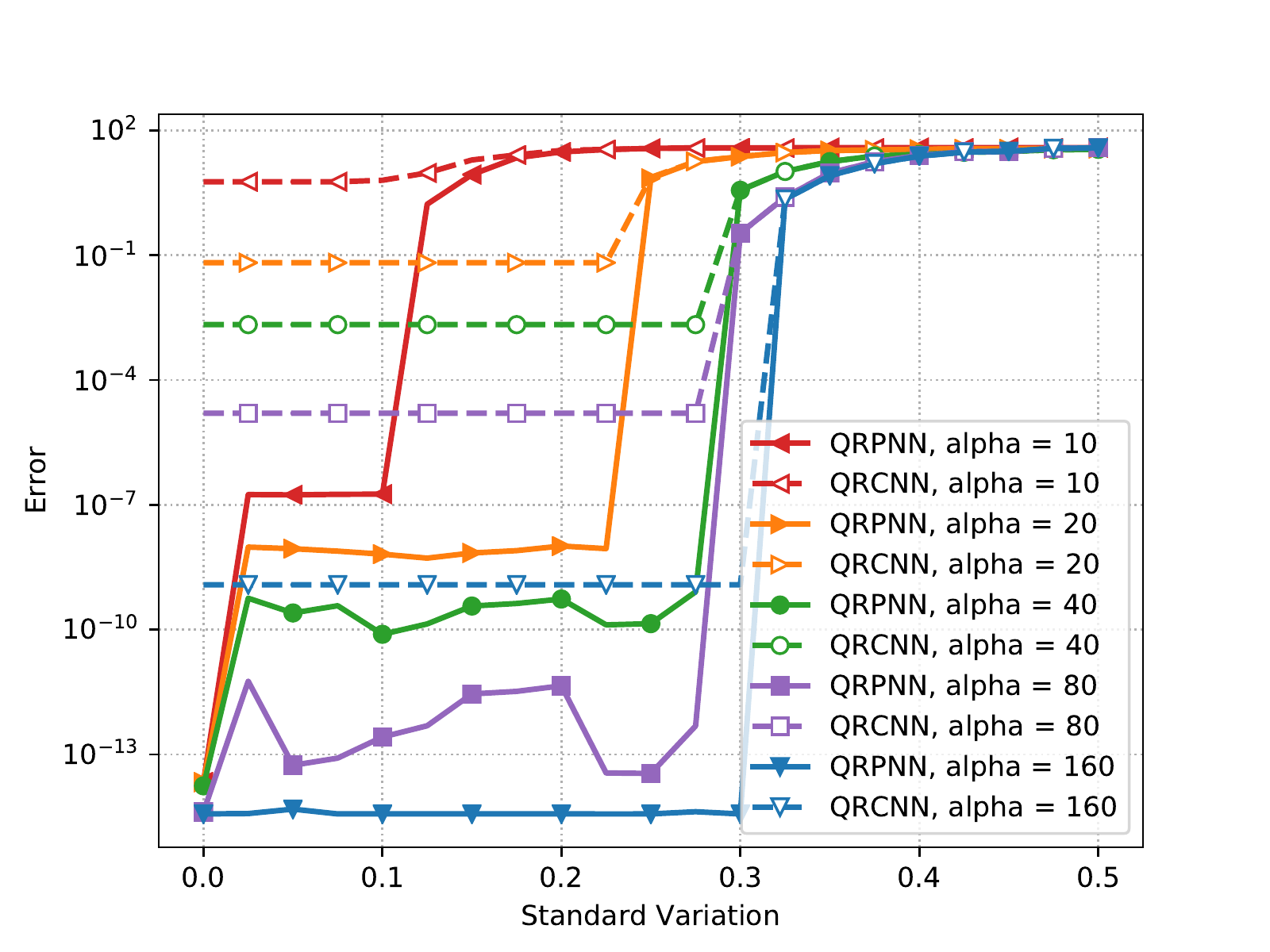}
    \caption{Error between the desired output and the retrieved quaternion-valued vector by the standard deviation of the Gaussian noise introduced in the input.}
    \label{fig:ExpCIFAR_Norm}
\end{figure}
Finally, we repeated the experiment used to generate Figure \ref{fig:ExpCIFAR} considering only the exponential QRPNN and QRCNN but with different values of the parameter $\alpha$. Moreover, to better discriminate the QRPNN and the QRCNN models, instead of computing the recall probability, we computed the Euclidean error between the desired output $\vetu^1$ and the retrieved vector $\vety$, that is, the error is given by $\|\vetu^1-\vety\|_2$. Figure \ref{fig:ExpCIFAR_Norm} depicts the average error by the standard deviation of the Gaussian noise introduced in the original color image. Note that the error produced by the exponential QRPNNs from an undistorted input are all around the machine precision, that is, around $10^{-14}$. Equivalently, the QRPNNs succeeded to recall undistorted images. Note also that the error produced by both QRPNN and QRCNN associative memories decreases as the parameter $\alpha$ increases. Nevertheless, the average error produced by the QRPNN are always below the corresponding QRCNN models. Finally, in accordance with Theorem \ref{thm:RPNNxRCNN}, the exponential QRPNN and QRCNN coincide when the parameter $\alpha$ is sufficiently large, i.e., $\alpha = 160$ in this experiment.

\section{Concluding Remarks} \label{sec:concluding}

In this paper, we presented the quaternion-valued recurrent projection associative memories (QRPNNs). Briefly, QRPNNs are obtained by combining the projection rule with the quaternion-valued recurrent correlation associative memories (QRCNNs). In contrast to the QRCNNs, however, QRPNNs always exhibit optimal storage capacity (see Theorem \ref{thm:fixed_points}). Nevertheless, QRPNN and QRCNN coincide in the saturated mode (see Theorem \ref{thm:RPNNxRCNN}). Also, bipolar QRPNN and the recurrent kernel associative memory (RKAM) models coincide under mild conditions (see Theorem \ref{thm:RKAMxRPNN}). 
The computational experiments provided in Section \ref{sec:ComputationalExperiments} show that the storage capacity and noise tolerance of QRPNNs (including real-valued case) are greater than or equal to the storage capacity and noise tolerance of their corresponding QRCNNs. 

In the future, using recent results on hypercomplex-valued Hopfield neural network \cite{castro20nn}, we plan to extend the RCNN and RPNN models for other hypercomplex algebras such as hyperbolic numbers, commutative quaternions, and octonions. We also intent to investigate further the noise tolerance of the QRPNNs as well as to address the performance of the new associative memories for pattern reconstruction and classification.

\section*{Acknowledgments}

This work was supported in part by CNPq under grant no. 310118/2017-4, FAPESP under grant no. 2019/02278-2, and Coordena\c{c}\~ao  de Aperfei\c{c}oamento  de Pessoal de N\'ivel Superior - Brasil (CAPES) - Finance Code 001.







\bibliographystyle{elsarticle-num-names}
\bibliography{main.bbl}







\end{document}